  \providecommand\BibTeX{{%
    \normalfont B\kern-0.5em{\scshape i\kern-0.25em b}\kern-0.8em\TeX}}}
\renewcommand{\vec}[1]{\mathbf{#1}}                     
\newcommand{\set}[1]{\uppercase{#1}}                    
\newcommand{\func}[1]{\mathcal{\uppercase{#1}}}     
\newcommand{\op}[1]{\mathbb{\uppercase{#1}}}        
\newcommand{\indep}{\perp \!\!\! \perp}
\newcommand{\eqnspace}{\;\;\;\;}
\newtheorem{postulate}{Postulate}
\newtheorem{theorem}{Theorem}
\newcommand{\header}[1]{\textbf{#1:}}
\newcommand{\highlight}[3]{\textcolor{#2}{#1: #3}}
\newcommand{\xmark}{\text{\ding{55}}}
\newcolumntype{G}{>{\columncolor{gray!30}}c}
\definecolor{PastelApricot}{RGB}{255, 229, 204}
\definecolor{PastelMelon}{RGB}{255, 179, 186}
\definecolor{PastelLavender}{rgb}{0.8, 0.8, 1.0}
\definecolor{PastelLavenderFrame}{rgb}{0.7, 0.7, 0.9}
\newcommand{\renee}{Ren\'ee\xspace}
\newcommand{\dexml}{DEXML\xspace}
\newcommand{\alg}{SKIM\xspace}
\newcommand{\llama}{LLaMA2}
\newcommand{\dataset}{\func D\xspace}
\newcommand{\orcas}{LF-Orcas-800K\xspace}
\newcommand{\wiki}{LF-WikiTitlesHierarchy-2M\xspace}
\newcommand{\cmark}{\ding{51}} 
\newcommand{\slmaug}{UDAPDR\xspace}
\newcommand{\yash}[1]{\highlight{YASH}{purple}{#1}}
\definecolor{tealblue}{rgb}{0.21, 0.46, 0.53}
\newcommand{\tab}{\phantom{xxxx}\xspace}
\newcommand{\fourtabs}{\tab\tab\tab\tab}
\newcommand{\eighttabs}{\fourtabs\fourtabs\xspace}
\begin{document}

\title{On the Necessity of World Knowledge for Mitigating \\ Missing Labels in Extreme Classification}

\author{Jatin Prakash}
\authornote{Equal contribution; Work done while at Microsoft Research; Correspondence to: jatin.prakash@nyu.edu, anirudh.buvanesh@umontreal.ca, yprabhu@microsoft.com}
\affiliation{%
  \institution{New York University}
  \country{USA}
}

\author{Anirudh Buvanesh}
\authornotemark[1]
\affiliation{%
  \institution{Mila, Université de Montréal}
  \country{Canada}
  }

\author{Bishal Santra}
\affiliation{%
  \institution{Microsoft Research}
  \country{India}
}

\author{Deepak Saini}
\affiliation{%
  \institution{Microsoft}
  \country{USA}
}

\author{Sachin Yadav}
\affiliation{%
  \institution{Microsoft Research}
  \country{India}
}

\author{Jian Jiao}
\affiliation{%
  \institution{Microsoft}
  \country{USA}
}

\author{Yashoteja Prabhu}
\affiliation{%
  \institution{Microsoft Research}
  \country{India}
}

\author{Amit Sharma}
\affiliation{%
  \institution{Microsoft Research}
  \country{India}
}

\author{Manik Varma}
\affiliation{%
  \institution{Microsoft Research}
  \country{India}
}





\renewcommand{\shortauthors}{Prakash et al.}


\begin{CCSXML}
<ccs2012>
   <concept><concept_id>10010147.10010257.10010258.10010259.10010263</concept_id>
       <concept_desc>Computing methodologies~Supervised learning by classification</concept_desc>
       <concept_significance>500</concept_significance>
       </concept>
 </ccs2012>
\end{CCSXML}





\received{20 February 2007}
\received[revised]{12 March 2009}
\received[accepted]{5 June 2009}
\newcommand{\mcheck}{\raisebox{-2pt}{\includegraphics[height=1em]{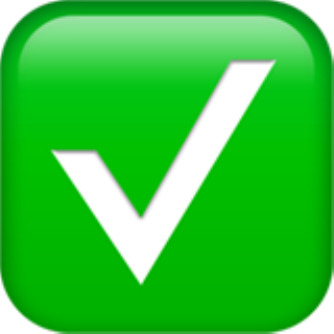}}}
\newcommand{\mques}{\raisebox{-2pt}{\includegraphics[height=1em]{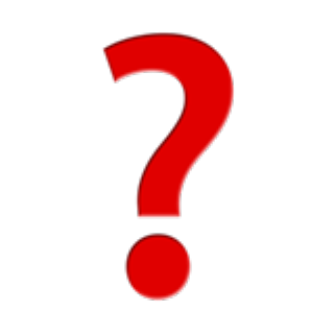}}}



\begin{abstract}

Extreme Classification (XC) aims to map a query to the most relevant documents from a very large document set. XC algorithms used in real-world applications learn this mapping from datasets curated from implicit feedback, such as user clicks. However, these datasets inevitably suffer from missing labels. In this work, we observe
that \textit{systematic} missing labels lead to missing knowledge, which is critical for accurately modelling relevance between queries and documents. 
We formally show that this absence of knowledge cannot be recovered using existing methods such as propensity weighting and data imputation strategies that solely rely on the training dataset. While LLMs provide an attractive solution to augment the missing knowledge, leveraging them in applications with low latency requirements and large document sets is challenging. To incorporate missing knowledge at scale, we propose \alg (\textbf{S}calable \textbf{K}nowledge \textbf{I}nfusion for \textbf{M}issing Labels), an algorithm that leverages a combination of small LM and abundant unstructured meta-data to effectively mitigate the missing label problem. 
We show the efficacy of our method on large-scale public datasets through exhaustive unbiased evaluation ranging from human annotations to simulations inspired from industrial settings.
\alg outperforms existing methods on \textit{Recall@100} by more than 10 absolute points. 
Additionally, \alg scales to proprietary query-ad retrieval datasets containing 10 million documents, outperforming contemporary methods by 12\% in offline evaluation and increased ad click-yield by 1.23\% in an online A/B test conducted on a popular search engine. We release our code, prompts, trained XC models and finetuned SLMs at: \textcolor{purple}{\href{https://github.com/bicycleman15/skim}{github.com/bicycleman15/skim}}

\end{abstract}
\maketitle


%

\section{Introduction}
\label{sec:introduction}

Extreme Classification (XC) addresses the challenge of mapping a query to the most relevant subset of documents from a very large set of documents. XC algorithms have demonstrated impressive performance across a wide range of applications, including document tagging \cite{you2019attentionxml, babbar2017dismec}, product recommendation \cite{medini2019extreme, dahiya2023ngame, dahiya2021siamesexml}, and search and advertisement \cite{mohan2024oak,jain2023renee}. However, these datasets are prone to \textit{missing labels}, as it is impossible to annotate all query-document pairs when the document space is vast, consisting hundreds of millions or even billions of documents. 
Retrieval models trained on such datasets with missing labels may fail to accurately capture the query-document relevance. 
In the aforementioned applications, XC algorithms drive the retrieval stage of the pipeline. 
Thus any inaccuracies here cannot be tackled in later stages, such as the ranking layer, and would adversely impact the overall system performance, making the retrieval stage crucial and the focus of this paper.



The extreme classification (XC) community has developed two primary approaches to address the challenge of missing labels. The first approach, propensity-based methods \cite{jain2016extreme, qaraei2021convex, wei2021towards, wydmuch2021propensity}, constructs unbiased loss functions by reweighting the original loss with query-document observational probabilities (i.e. propensities). 
The second approach encompasses naive imputation methods \cite{kharbanda2023gandalf, buvanesh2024enhancing}, which aim to infer missing labels using various techniques. 
However, both these approaches are limited by their heavy reliance on large-scale but potentially biased training data, as shown later in this paper.



%

In this work, we revisit the problem of missing labels in XC from the novel perspective of \textit{world knowledge}. XC, fundamentally a knowledge-intensive task, benefits significantly from external information that aids in disambiguating named entities such as people, brands or locations, remembering useful facts and acquiring domain-knowledge-specific associations.
For example, as shown in Figure~\ref{fig:teaser}, determining the relevance of a document like \textit{``medical.net/what-are-genes''} to the query \textit{``what is an exon?''} requires the understanding that \textit{exons} are integral components of \textit{genes}. Such world knowledge becomes increasingly important in short-text based XC tasks where queries and documents are pithy. 

As XC models are trained using supervised learning algorithms on query-document pair datasets, they need to acquire the necessary knowledge from the training dataset alone. 
However, the vast and diverse nature of world knowledge presents a significant challenge: limited human-annotated ground truth datasets struggle to comprehensively cover the long tail of unique information required for optimal performance. 
We refer to the special case of missing labels in XC, where all the labels containing some knowledge go missing together, as the phenomenon of ``systematic missing label bias.''
We rigorously demonstrate, both theoretically and empirically, that traditional debiasing or supervised learning techniques that rely solely on the training dataset are inherently limited in their ability to recover such biases and solve this learning problem accurately. 
To overcome this limitation, it is necessary to go beyond the world knowledge encoded in the training datasets.

Recently, Large Language Models (LLMs) have been gaining popularity as effective knowledge bases \cite{he2024can}, where they store vast world knowledge in their enormous parameters. Additionally, they can accurately predict user search preferences as good as human labellers \cite{thomas2024large}. With these advances, LLMs offer a potential solution to bridge this missing knowledge accurately in retrieval datasets. However, it is challenging to utilize them in applications having large output spaces as they fail to scale and satisfy the latency requirements of XC applications. A promising alternative is to utilize much more scalable Small Language Models (SLMs) for augmenting missing knowledge since these too are trained on huge amounts of internet data \cite{touvron2023llama}. However, they lack parametric world knowledge making them unreliable for accurate generation \cite{kandpal2023large}. 

\begin{figure}[tbp]
	\centering
\includegraphics[width=\linewidth]{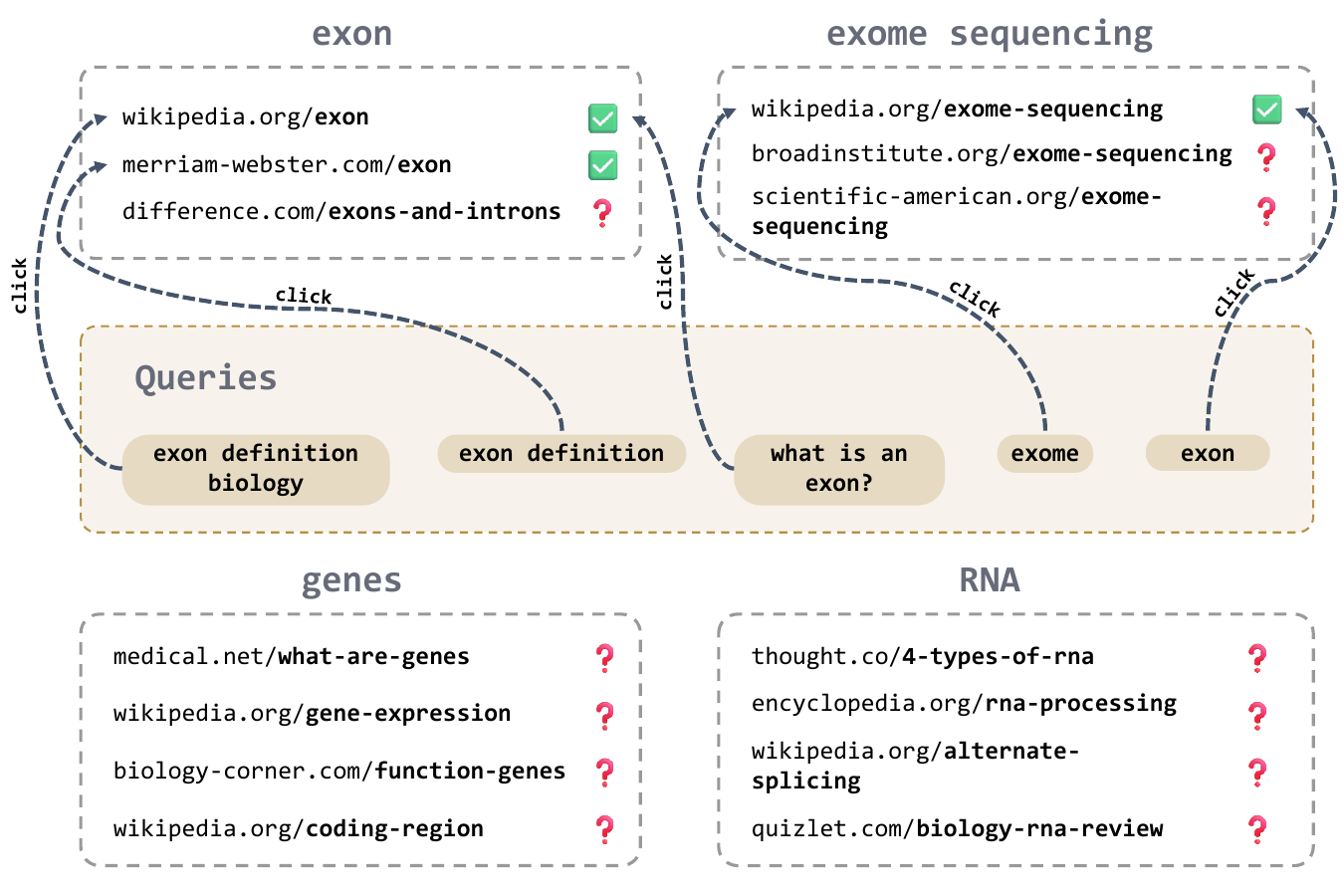}
		\caption{Connections between queries and documents define the knowledge available in a retrieval dataset. In the above example, the document concept \textit{exome sequencing} is connected to the query ``exon'' through a user click, encoding the knowledge that concepts \textit{exon} and \textit{exome sequencing} are related. The document concept \textit{exon} is not directly connected to query ``exome'' but this relationship can possibly be learnt (see Table \ref{tab:label_concepts} in appendix). 
        However, the relevance of \textit{exon} to \textit{genes} or \textit{RNA} is impossible to learn through this dataset because there are no connecting clicks providing this knowledge, that is, those connections (documents) are \textit{systematically} missing for \textit{exon}. (Note:  \mcheck~implies \textit{relevant \& clicked}, \mques~implies \textit{relevant \& missing})
  }
		\label{fig:teaser}
\end{figure}



To address these challenges, we propose \alg (\textbf{S}calable \textbf{K}now-ledge \textbf{I}nfusion for \textbf{M}issing Labels), an algorithm that elegantly combines document metadata and SLMs to mitigate the problem of missing knowledge in extreme classification at scale. 
We observe that in XC datasets, documents have readily available abundant metadata that contain useful knowledge to recover the missing labels, however, metadata is present in the form of unstructured text, making it hard to use \cite{mittal2024graphregularizedencodertraining}.
Thus, our approach consists of two key stages: (i) \textit{diverse query generation}: using unstructured metadata text, we generate diverse synthetic queries that cater to different knowledge items within the document,
(ii) \textit{retrieval-based mapping}: we employ a retrieval model to map the generated synthetic queries to the space of the train queries in the dataset.
Through this process, \alg extracts only the relevant knowledge pertinent to the retrieval task from unstructured metadata and augments it to the existing training dataset, making it generally applicable to any XC or retrieval model.
\alg efficiently scales to datasets containing millions of documents, while maintaining the accuracy and reliability necessary for XC tasks.


In this work, we additionally note that datasets collected from click logs, like their train sets, have test sets corrupted by the same missing labels bias making offline evaluation unreliable. Instead, we propose to evaluate \alg and other baselines using a combination of more trustworthy alternatives: (i) small-scale human-annotated test sets, (ii) simulation settings where we aim to incorporate biases encountered in real-world industrial systems, and (iii) conducting live A/B tests.
We test the efficacy of \alg not just on large-scale public datasets, but also on a real-world query-ad keyword retrieval dataset, where \alg scales seamlessly to 10 million ad keywords (documents). 
\alg outperforms contemporary methods significantly in all offline evaluation metrics, and in A/B tests on a popular search engine showed significant gains in click yield. 

This paper makes the following salient contributions:
\begin{itemize}
    \item Studies a novel and important connection between the world knowledge and the missing labels in XC, and introduces the notion of systematic missing label bias.
    \item Establishes, through rigorous theoretical arguments, the inefficacy of standard debiasing techniques such as propensity-scoring and naive label imputation in mitigating systematic missing labels.
    \item Proposes \alg, a highly accurate and scalable method, to effectively mitigate the systematic missing labels problem by leveraging the knowledge present in LLMs/SLMs and unstructured metadata text.
    \item Reliably evaluates unbiased performance of \alg with competing baselines using a plethora of methods, including human annotated test set, controlled simulation and live A/B test conducted on a popular search engine. 
    We additionally perform extensive ablations to support our design choices.
    \item We open-source our code, trained XC models, finetuned SLMs, generated synthetic datasets using \alg, LLM prompts \& finetuning LLM responses at: \textcolor{purple}{\href{https://github.com/bicycleman15/skim}{github.com/bicycleman15/skim}}
\end{itemize}
\section{Related work}

\textbf{Extreme classification}: 
The leading methods for extreme classification can be broadly categorized into two families: one-versus-all classifier-based methods \cite{jain2016extreme, prabhu2014fastxml} and dual encoders \cite{kharbanda2022cascadexml, jain2023renee, gupta2022elias, dahiya2023ngame, gupta2024dualencoders}. Some of the recent innovations in XC 
include transition from sparse feature based models \cite{agrawal2013multi,babbar2017dismec,jain2016extreme,babbar2019data,bhatia2015sparse,barezi2019submodular,jain2019slice,prabhu2014fastxml,prabhu2018extreme,prabhu2018parabel,khandagale2020bonsai} to deep networks \cite{dahiya2021deepxml, dahiya2023deep, dahiya2023ngame}, end-to-end model training \cite{jain2023renee}, leveraging label text with Siamese networks \cite{lu2020twinbert,qu2020rocketqa,lu2021less},  and effective negative sampling strategies \cite{dahiya2023ngame,jiang2021lightxml,rawat2021disentangling,guo2019breaking,xiong2020approximate,reddi2019stochastic}. 




\textbf{Missing label bias:} 
XC training datasets are known to suffer from missing label bias \cite{jain2016extreme}, owing to the presence of
selection \cite{marlin2012collaborative}, position \cite{collins2018study}, exposure \cite{lee2023uctrl, zhang2020causaldebiasing} and inductive biases \cite{chen2023bias} in retrieval applications. Missing labels cause relevant documents to go missing from the observed training data \cite{schultheis2022missing} resulting to inaccurate model training.

In XC, the most commonly adopted solution to the missing label problem is propensity-based learning \cite{jain2016extreme,qaraei2021convex,wydmuch2021propensity,wei2021towards}. Propensity score estimates the likelihood of a query-document pair being omitted from dataset, despite being relevant. \citet{jain2016extreme} proposed propensity based debiasing of the training and evaluation process for XC models. However, we demonstrate that propensity-based learning cannot recover the "missing knowledge" in training dataset. 
The missing labels problem has also been explored in some closely related domains: recommendation \cite{saito2020unbiased,wang2019doubly,schnabel2016recommendations} and positive unlabeled learning (PUL) \cite{bekker2018learning,bekker2020learning,jaskie2019positive,kato2019learning}. More details about these works can be found in the Appendix \ref{sec:appendix-rworks}.

\textbf{Teacher Models and Data Augmentation:}
Another stream of literature aims to train performant XC or retrieval models by augmenting \cite{dai2022promptagatorfewshotdenseretrieval, saadfalcon2023udapdr, jeronymo2023inparsv2largelanguagemodels, bonifacio2022inparsdataaugmentationinformation,mittal2024graphregularizedencodertraining, mohan2024oak,qu2021rocketqa,aggarwal2023semsupxc} the training dataset with additional resources like teacher models, and query or document meta-data. LEVER \cite{buvanesh2024enhancing} and Gandalf \cite{kharbanda2023gandalf} attempts to impute the missing labels in the training dataset by using auxiliary models. However, their continued reliance on the biased training data prevents them from solving the missing knowledge problem.

\textbf{Large Language Models:} Recently, LLMs have been used for generating data for training task-specific models \cite{lee2023making}, newer SLMs \cite{gunasekar2023textbooksneedphi,maini2024rephrasing,hartvigsen2022toxigen,rosenbaum2022clasp,rosenbaum2022linguist}, and multimodal models \cite{li2024datacomp}. This approach to training models using LLM-generated datasets help in combating problems like data scarcity in low-resource settings, or lack of unbiased and clean datasets.
The strategies used for such synthetic dataset creation include quality filtering \cite{rae2021scaling,muennighoff2024scaling, fang2024data, abdin2024phi}, deduplication \cite{abbas2023semdedup}, data mixing \cite{xie2024doremi, albalak2023efficient, du2022glam}, synthetic data generation \cite{gunasekar2023textbooksneedphi, maini2024rephrasing}, data augmentation \cite{whitehouse2023llm}, or dataset transformation \cite{gandhi2024better}. There have also been some efforts to employ SLMs directly for retrieval, by finetuning them \cite{mohankumar2023unified,muennighoff2024generative,ma2024finetuningllama}, or rephrasing queries/documents \cite{wang2023query2doc}.


In contrast, we propose a way for \textit{completing} existing task specific dataset by addressing missing knowledge problems in the dataset. 
While existing work mainly focus on creating task-specific datasets from scratch or augmenting to extend the scale of these datasets, it is possible that the biases in the source dataset / LLM may just get scaled up in this synthetic data creation process. 
\begin{figure*}[h]
	\centering
\includegraphics[width=\linewidth]{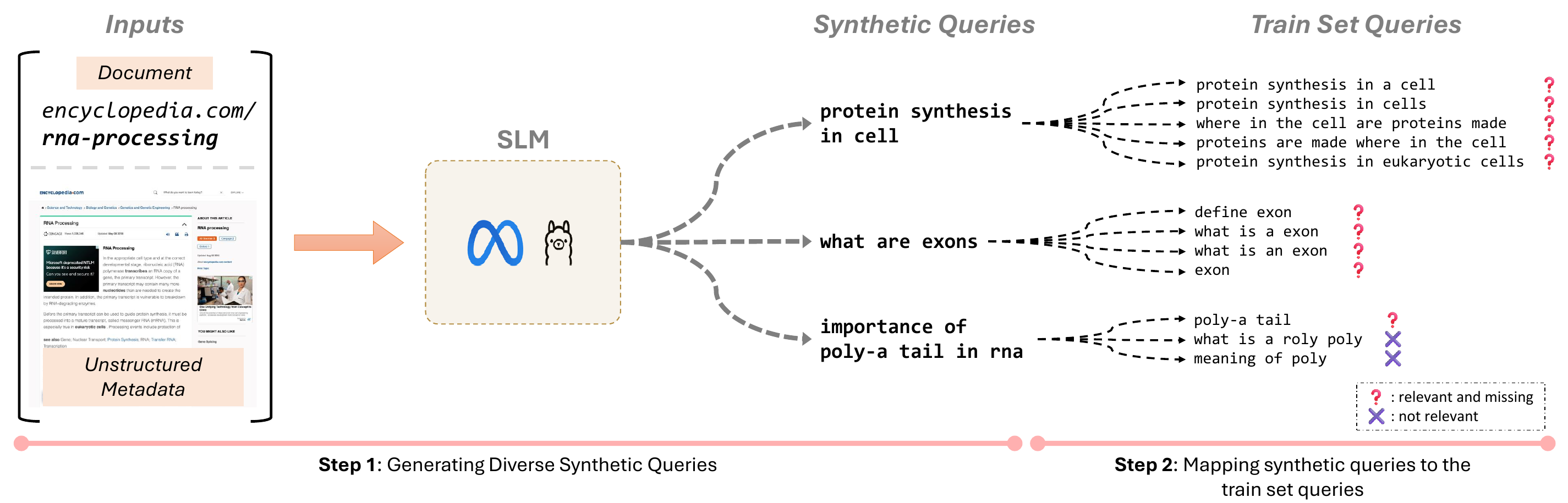}
		\caption{
  Steps of the \alg algorithm depicting how we bridge the missing knowledge in biased training datasets. In Step 1, for a URL (document) ``\textit{\href{https://www.encyclopedia.com/science-and-technology/biology-and-genetics/genetics-and-genetic-engineering/rna-processing}{encyclopedia.com/.../rna-processing}}'', the finetuned SLM generates diverse synthetic queries spanning concepts like \textit{protein synthesis}, \textit{exons}, \textit{poly-a tail} etc using the available unstructured meta-data (see Figure \ref{fig:slm_rephrasing_metadata_figure} in appendix). 
  In Step 2, a retriever is used to increase the coverage of the chosen document to relevant train queries through these synthetic queries, e.g. synthetic query ``\textit{what are exons}'' is mapped to similar train queries like ``\textit{define exon}'' and ``\textit{exon}'' which were missing for the document. 
  The retriever additionally filters out irrelevant train queries for the document using the similarity threshold $\tau$. 
  }. 
		\label{fig:alg_figure}
\end{figure*}

\section{Theory: Necessity of External Knowledge}
\label{sec:theory}

This section studies the phenomenon of {\it systematic} missing label bias in XC. It characterizes the nature, origin and properties of the systematic missing labels, and theoretically demonstrates 
that the classical debiasing techniques of XC such as propensity scoring and naive imputation are unable to mitigate such biases.

\subsection{Preliminaries}
\label{sec:preliminaries}

\header{Extreme Classification} Extreme Classification or XC involves learning a mathematical model $\func M$ that takes a query $\vec x \in \func X$ as input and predicts relevant subset of documents $\vec y \in \{0,1\}^L$ from a set of $L$ (a large number) documents with the $l$th document deemed relevant to $\vec x$ if $y_l=1$ and irrelevant otherwise. The model $\func M$ is trained by supervised learning on a large training set $\func D = \{\{\vec x_i,\vec y_i\}_{i=1}^{N},\{\vec z_l\}_{l=1}^L\}$ where $\vec x_i, \vec z_l$ are the textual features of $i$th query and $l$th document respectively and $\vec y_i$ are the ground truth documents for $\vec x_i$. Often, $\{\vec y_i\}_{i=1}^N$ contains many false negatives where a document $l$ is indeed relevant to query $i$ but is labelled as irrelevant ({\it i.e. $y_{il} = 0$}). This is the well-known problem of missing label bias in XC~\citep{jain2016extreme, schultheis2022missing}. Training $\func M$ on data with missing labels can lead to inferior model quality and yield inaccurate predictions when deployed in real-world applications. We revisit the missing label bias from the perspective of world knowledge.

\header{Training data collection} As the number of queries and documents in a typical XC application can range in millions or more,  human annotation of documents becomes infeasible. Consequently, most applications rely on implicit feedback from users to generate training data cost-effectively at scale. For example in a search application, when a user asks a query $\vec x_i$ sampled from a distribution $P_x = \op P(\vec x)$, a serving algorithm processes it and displays some results which are browsed by the user and clicked on in case they are found to be relevant to the query. User clicks are sparse; typically, a single document may be clicked for each user query. Let the probability that a document $\vec z$ gets clicked for a query $\vec x$ be denoted as $P_{z|x} = \op P_{\text{click}}(\vec z | \vec x)$. User clicks are not accurate and are subject to numerous biases such as exposure bias where, due to the inaccuracies of the serving algorithm, some relevant documents are not displayed for a query, and position bias where users click less on relevant but lower ranked documents, and so on. Such biases manifest as missing labels in the training data. For simplicity, we assume a uniform value of $1-B$ for exposure bias. Given this, the true probability that the document $\vec z$ is relevant to query $\vec x$  becomes $\op P(y_{xz}=1 | \vec x, \vec z) = P_{z|x}/B$. The data is aggregated over a specified time period, and the training dataset $\func D$ is constructed from the set of all user-entered queries, available documents and the user clicks from this period. 


\subsection{Systematic Missing Label Bias}

\header{World knowledge} In many XC tasks, the query and document texts, on their own, do not contain sufficient information to make relevance decisions. Therefore, users draw upon additional world knowledge for their clicks. 
Typically, the users derive such knowledge either from their personal world knowledge or from reading additional meta-data provided with the documents. 
An important property of world knowledge is that it is easily compressible across training samples. For example, in Figure~\ref{fig:alg_figure}, a knowledge statement that "\textit{rna} is related to \textit{protein-synthesis}" does not directly entail the statement "\textit{exon} is related to \textit{rna} or \textit{genes}". Due to this, world knowledge required for solving a large-scale XC task typically requires vast amounts of long-tail knowledge. For precise understanding, we formalize such intuitions about world knowledge into mathematical statements below.

We assume that the world knowledge required in an XC task is a large and discrete set of knowledge items $\func K  = \{k_m\}_{m=1}^K$. These items can correspond to statements such as those in the previous para, but the following analysis is independent of the design choice for $\func K$. The set $\func K$ is assumed to satisfy the following:

\begin{postulate} \label{postulate:externality}
\textbf{Externality of $\func K$}: The world knowledge is not derivable from and is independent of query and document texts: \\ $\{k_m\}_{m=1}^K \indep \{x_i\}_{i=1}^N, \{z_l\}_{l=1}^L$
\end{postulate}

\begin{postulate} \label{postulate:Incompressibility}
\textbf{Incompressibility of $\func K$}: The world knowledge items are not derivable from and are independent of each other: \\ $k_m \indep \func K \setminus k_m \;\;\; \forall m \in [1,\cdots, K]$
\end{postulate}

Typically, as in the showcased examples, users utilize only a small amount of knowledge for judging any given query-document pair. We formalize this as follows:

\begin{postulate} \label{postulate:sparsity}
\textbf{Sparsity of $\func K$}: The relevance of any query-document pair can be fully determined by using  exactly one item from the knowledge set: \\
$y_{il} = R(\vec x_i, \vec z_l, k_{\vec x_i, \vec z_l}) \in \{0,1\}$ where $R$ is the deterministic  true relevance function and $k_{\vec x_i, \vec z_l}$ is the knowledge item utilized by the pair. For notational convenience, let $D_m = \{(\vec x, \vec z) \in \func X \times \func Z : k_{\vec x,\vec z} = k_m\}$ denote the space of query-doc pairs which depend on the item $m$.
\end{postulate}

Note that the three postulates mentioned above do not hold fully in real applications. More realistic but complex settings can be explored in the future.

\header{Systematic missing labels} The necessity of world knowledge for judging relevance applies not only to the users but also to the serving algorithms. However, whereas humans can naturally store vast internal knowledge and possess reasoning ability to glean more knowledge from the meta-data, XC models do not inherently have these capabilities. As a result, XC models need to acquire such knowledge by  supervised training on the dataset $\func D$. For this, the training data needs to have an adequate coverage of $\func K$ among its clicked ({\it i.e.} positive/relevant) query-document pairs. Conversely, if some knowledge is completely missing in the training data, it cannot be learned by the supervised model, no matter how large the training set size is. This is proved by the following lemma:

\begin{lemma}
If $y_{il}=0 \eqnspace \forall (\vec x_i,\vec z_l) \in D_m$, then for any test pair $(\vec x,\vec z) \sim D_m$, $R(\vec x,\vec z,k_m) \indep \func D$ where $\func D = \{\{\vec x_i,\vec y_i\}_{i=1}^{N},\{\vec z_l\}_{l=1}^L\} $ is the training dataset.
\end{lemma}

We refer to such a missing label phenomenon, where all the relevant documents corresponding to some knowledge items go missing in the training ground truth, as the problem of \textbf{systematic missing labels}. Systematic missing labels lead to loss of crucial knowledge which cannot be re-constructed by any supervised model (including extreme classifiers) trained on $\func D$. As a result, such models will be unable to reliably predict any relevant query-document pair that depends on the missing knowledge $k_m$, as proven next:

\begin{corollary}
If $y_{il}=0 \eqnspace \forall (\vec x_i,\vec z_l) \in D_m$, then for any test pair $(\vec x,\vec z) \sim D_m$, $R(\vec x,\vec z,k_m) \indep \func M$; where $\func M $ is a predictive model trained deterministically from $\{\{\vec x_i,\vec y_i\}_{i=1}^{N},\{\vec z_l\}_{l=1}^L\}$.
\end{corollary}

When the missing $k_m$ belongs to the long-tail of $\func K$, the above result also holds true for propensity-scored approaches~\citep{jain2016extreme,qaraei2021convex}, as they typically train on just $\func D$ and, a small unbiased test set $\func D'$ which is unlikely to contain pairs from $k_m$. Similarly, this also applies to naive imputation techniques~\citep{kharbanda2023gandalf,buvanesh2024enhancing} which rely on either the ground truth or another supervised model to impute a target extreme classifier.

\subsection{Lower Bound on Model Performance}
While the previous part presents systematic missing labels as a theoretical construct, they are indeed real and frequent in many practical XC applications requiring vast and long-tail knowledge (for example see Figure \ref{fig:long-tail-in-xc-applications} in Appendix).
There are two key reasons behind this. First, long-tail labels cannot be consistently covered even with a large training set due to inherent stochasticities of the data sampling process. Second, user clicks are affected by exposure bias which further reduces the amount of clicks on the long-tail.

During click-based training set generation, the marginal probability of sampling a relevant pair $(\vec x,\vec z) \in D_m$ associated with a knowledge item $m$ is $p_m = \sum_{(\vec x, \vec z) \in D_m} P_x P_{z|x}$. Note that $\sum_{m=1}^K p_m = 1$. {\it W.l.o.g} assume that the indices $\{1,\cdots,K\}$ of knowledge items are sorted according to decreasing marginals $p_m$. Also, denote the survival function $\bar{F}_m = \sum_{m'=1}^m p_{m'}$. Now, a long-tailed distribution is typically expected to have a significant portion of aggregate density in the tail portion, {\it i.e.} for some large $m$, $\bar{F}_m$ is reasonably large even though $p_m$ is very small. The below theorem establishes a lower bound on the amount of systematic missing labels introduced in the training data as a function of the amount of knowledge tail and exposure bias.






\begin{theorem}
Let $n$ be the number of clicked query-doc pairs sampled for the training data, $B$ be the exposure bias while sampling, and $\bar{F}_m,p_m$ be defined as above. Then, for any $0 \le \delta \le 1$, with probability at least $1-\delta$, at least $max_{m=1}^K \frac{\bar{F}_m}{B} e^{-B. p_m. n} - \frac{1}{2 \sqrt{2n}} (\log{\frac{2K}{\delta}}^{\frac{3}{2}})$ fraction of the true relevant label distribution will correspond to the systematic missing label region and therefore will be irrecoverable by supervised training.
\end{theorem}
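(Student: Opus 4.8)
The plan is to control the random quantity $W$, the fraction of the true relevant label distribution supported on knowledge items that are systematically missing from $\func D$, by first lower-bounding $\mathbb{E}[W]$ and then showing $W$ does not fall far below its mean. By Postulate~\ref{postulate:sparsity} the missing region is $\bigcup_{m\in S}D_m$ for the (random) set $S=\{m:N_m=0\}$ of knowledge items untouched by any of the $n$ sampled clicked pairs, where $N_m$ counts the pairs falling in $D_m$; since a relevant pair in $D_m$ carries true-relevance mass $P_x\,P_{z|x}/B$ (using $\op P(y_{xz}=1\mid\vec x,\vec z)=P_{z|x}/B$), the block $D_m$ carries mass $p_m/B$ and $W=\tfrac1B\sum_{m\in S}p_m$.

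First I would compute $\mathbb{E}[W]$. A collected clicked pair lands in $D_m$ with probability $Bp_m$ (the marginal $p_m$ thinned by the uniform exposure factor $B$); modelling the time-aggregated counts as independent $N_m\sim\mathrm{Poisson}(Bp_m n)$ — natural since the data is accumulated over a window, and a finite-$n$ binomial changes nothing beyond lower-order terms that matter only away from the optimizing index — gives $\Pr[m\in S]=e^{-Bp_m n}$, so
\[
\mathbb{E}[W]\;=\;\frac1B\sum_{m=1}^{K}p_m\,e^{-Bp_m n}.
\]
Now fix any index $m$, keep only the terms $m'$ with $p_{m'}\le p_m$ (whose masses add up to $\bar F_m$), and use that $p\mapsto e^{-Bpn}$ is decreasing, so each kept term is at least $e^{-Bp_m n}$ times its mass: $\mathbb{E}[W]\ge\tfrac{\bar F_m}{B}e^{-Bp_m n}$. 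As $m$ was arbitrary, $\mathbb{E}[W]\ge\max_{1\le m\le K}\tfrac{\bar F_m}{B}e^{-Bp_m n}$.

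Second, I would prove a lower-tail concentration for $W=\tfrac1B\sum_m p_m\mathbf{1}[N_m=0]$, a sum of independent terms. The knowledge items with $Bp_m n$ at least of order $\log\tfrac{2K}{\delta}$ have $\Pr[N_m=0]=e^{-Bp_m n}\le\delta/(2K)$, so by a union bound, with probability at least $1-\delta/2$, \emph{every} missing item lies in the band $\mathcal B=\{m:Bp_m n=O(\log\tfrac{2K}{\delta})\}$; on that event the per-term range is $p_m/B=O(\log\tfrac{2K}{\delta}/(B^2 n))$ while $\tfrac1{B^2}\sum_m p_m^2 e^{-Bp_m n}$ (which controls $\mathrm{Var}(W)$) is $O(1/(B^3 n))$. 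Feeding these into a Bernstein (or restricted bounded-differences) tail bound and spending the remaining $\delta/2$ budget yields, with probability at least $1-\delta$,
\[
W\;\ge\;\mathbb{E}[W]\;-\;\frac{1}{2\sqrt{2n}}\Big(\log\tfrac{2K}{\delta}\Big)^{3/2}.
\]
Chaining this with the previous paragraph proves the bound, and the Corollary then converts membership in ``the systematic missing label region'' into being irrecoverable by any deterministic supervised learner trained on $\func D$.

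The main obstacle is pinning down the concentration term in exactly the stated form. A bounded-differences estimate with the crude constant $\max_m p_m/B$ produces a deviation that \emph{grows} like $\sqrt n$ and is vacuous; the whole game is that only knowledge items in a $\delta$-dependent probability band around $p_m\sim 1/(Bn)$ are genuinely uncertain — items well above it are covered with overwhelming probability, items well below contribute negligible mass — so both the effective Lipschitz constant and the variance proxy shrink once one conditions on the right event. Getting the width of that band right (this is where the extra half-power of $\log\tfrac{2K}{\delta}$ and the constant $\tfrac1{2\sqrt2}$ emerge from the interplay of the union-bound logarithm, the variance proxy and the per-term range) and carefully tracking how the exposure factor $B$ threads through all three quantities is the delicate bookkeeping; the remaining steps, including the binomial-versus-Poisson slippage, only contribute terms of lower order than the stated $n^{-1/2}$ deviation.
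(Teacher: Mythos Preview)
Your expectation lower bound is exactly the paper's (they write $(1-Bp_m)^n$ rather than Poissonizing, but land on the same $\max_m \tfrac{\bar F_m}{B}e^{-Bp_mn}$). The concentration step is where the two routes diverge. The paper does \emph{not} view $W$ as a sum over the $K$ knowledge items; it treats the loss as a function of the $n$ i.i.d.\ clicked pairs and invokes the \emph{Extended McDiarmid} inequality (the version allowing a low-probability bad set, from Mohri et al.). For a threshold index $m'$ the bounded-difference constant on the good set is taken as $p=p_{m'}$ and the bad-set probability is bounded by $q\le\sum_{m\le m'}(1-p_m)^n\le m'(1-p_{m'})^n$; plugging into Extended McDiarmid and then optimizing over $p$ as a free parameter --- balancing $2Ke^{-np}$ against $2e^{-2\epsilon^2/(np^2)}$ at $p\propto(\epsilon^2/n^2)^{1/3}$ --- is precisely what manufactures the exponent $n^{1/3}\epsilon^{2/3}$ and hence, after inversion, the deviation $\tfrac{1}{2\sqrt{2n}}\bigl(\log\tfrac{2K}{\delta}\bigr)^{3/2}$ with its exact constant. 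Your Bernstein-on-the-band route is arguably cleaner and, carried through with your variance estimate $\sum_m p_m^2 e^{-Bp_m n}=O(1/n)$, actually delivers a \emph{sharper} $O\bigl(\log(2K/\delta)/\sqrt n\bigr)$ deviation, so the stated bound follows a fortiori --- but that same sharpness is why the specific constant $\tfrac{1}{2\sqrt 2}$ will not fall out of your calculation the way you suggest. More importantly, you lean on independence of the indicators $\mathbf 1[N_m{=}0]$, which is a Poisson-model artefact; under the paper's fixed-$n$ multinomial the $N_m$ are only negatively associated, so rigorizing your step there requires a negative-dependence concentration result rather than a ``lower-order slippage'' remark. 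The paper's choice to keep the $n$ sampled pairs (rather than the $K$ item counts) as the independent coordinates sidesteps that issue entirely.
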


Note, from the above theorem, that (1) the error can be significant when $p_m$ is of the order $\frac{1}{n}$ which can happen when the knowledge grows with the training set size, (2) the error increases with the exposure bias $B$. Proof is in the Appendix.

Therefore, systematic missing labels are a serious concern because not only are they hard to mitigate with biased training data but they can also lead to vicious feedback loops in retrieval systems where errors pertaining to systematic missing labels are retained and amplified over a period of time.

\section{Method}
In the previous section, we established that the observed training datasets alone are insufficient for bridging the existing knowledge gaps, which are crucial for addressing systematic missing labels. 
To tackle this issue, we propose a novel algorithm in the following subsection 
which integrates external world knowledge beyond what is available in the training dataset.


\subsection{\alg: \underline{S}calable \underline{K}nowledge \underline{I}nfusion for \underline{M}issing Labels}

\textit{\textbf{Overview.}} With the recent advancements in LLMs where they can act as knowledge bases \cite{he2024can} and can accurately predict user search preferences \cite{thomas2024large}, they offer a natural solution to mitigate knowledge gaps, possibly by annotating every query-document pair in the dataset. 
However, this may take years and is prohibitively expensive, requiring $O(t_{LLM} \cdot N \cdot L)$ calls where both $N$ (number of queries) and $L$ (number of documents) is in millions in XC datasets.
Instead, we propose to use an alternative strategy to directly generate relevant synthetic queries for a document that brings down the cost to $O(t_{LLM} \cdot L)$, sidestepping annotating every query-document pair possible. However, this is still not scalable for XC datasets. One could possibly make this practical by using a SLM, that takes $O(t_{SLM} \cdot L)$ time, however, the generation quality might suffer due to limited parametric knowledge (Fig. \ref{fig:gpt4_pt_ft_llama} in Appendix).

\textbf{\textit{SKIM.}} Therefore, to tackle the issue of (i) limited knowledge and (ii) scalability at once, we propose \alg, that stands for \textbf{S}calable \textbf{K}nowledge \textbf{I}nfusion for
\textbf{M}issing Labels. 
SKIM utilizes a SLM for scalability, but incorporates abundant unstructured metadata text available in XC datasets to circumvent limited parametric knowledge.
At a high level, \alg can be described in the following steps:
(i) generating diverse synthetic queries that are representative of the missing knowledge for a document,
and (ii) systematically incorporating these synthetic queries into the training data by mapping them to multiple similar train set queries. 
Generating diverse synthetic queries in Step (i) ensures we directly add different relevant knowledge items that might be missing in the training dataset, 
while Step (ii) ensures that each synthetic query can result in multiple train query and document pairs for each knowledge item being introduced to reinforce it in the dataset.
This yields \alg both highly accurate and scalable. Additionally, since \alg adds new (train query, document) pairs directly in the training dataset, this makes it usable with any XC model, making it generally applicable.
First, we outline the core components of \alg that utilizes a LLM for clarity. 
In the subsequent section, we then discuss a scalable modification that replaces LLM with SLM designed to handle large-scale XC datasets (see Figure~\ref{fig:alg_figure}).

\textbf{Step 1. Generating diverse synthetic queries.}
We prompt a LLM to generate diverse synthetic queries given a document. 
Given that knowledge items are often incompressible, diversity in query generation is important to ensure coverage of various knowledge items.
Additionally, we choose to generate queries for a document since XC models consider every document as an independent classifier (that being said, this step can easily be reversed i.e. for every query, LLM can generate diverse documents).

Due to the extensive parametric world knowledge of LLMs, they are capable of generating queries that can address existing knowledge gaps in typical XC training datasets.
For instance, in the \orcas dataset, which involves matching user queries to relevant URLs, a LLM like GPT4 generates synthetic queries for a URL (document) such as \textit{\href{https://www.encyclopedia.com/science-and-technology/biology-and-genetics/genetics-and-genetic-engineering/rna-processing}{encyclopedia.org/.../rna-processing}} that span diverse knowledge items relating to \textit{genes}, \textit{exons} and, \textit{DNA}. 
One may choose to include available metadata with the document. 
%
We instruct the LLM using a well-crafted prompt that describes the task and provides high-quality in-context examples ensuring diversity and accuracy in synthetic query generation. 
For our implementation, we use GPT-4 \cite{achiam2023gpt} as the LLM. 
The prompts used are provided in the Appendix \ref{sec:gpt4-llama-prompts}. 
This step is repeated for every document, resulting in an asymptotic time complexity of $O(t_{LLM} \cdot L)$.
In practice, however, this process would be costly and cannot scale to XC datasets containing millions of documents as noted earlier. 
Therefore, we discuss a scalable version of Step 1 in the subsequent subsection \ref{sec:scaling_skim}, that scales to XC datasets and is accurate at the same time.

\textbf{Step 2. Mapping synthetic queries to train set queries.}
While the synthetic queries are relevant and diverse, they may not match the distribution of real users' queries. 
Assuming that the number of queries is vast, to enforce the true distribution over queries, we map them to real queries. 
Specifically, we use the generated synthetic queries from the previous step and map them onto the train queries, that is, for every synthetic query corresponding to a document, we take the nearest neighboring train queries in the train set and consider them as relevant train queries for this document. 
This step additionally helps increasing the coverage of each generated synthetic query. 
For instance, a synthetic query like "\textit{what are exons}"  generated in the previous step, after a nearest neighbor search in the train query space, yields neighbors like "\textit{define exon}", "\textit{what is an exon}", and "\textit{exon}", which are minor variations of the synthetic query. 
To ensure that only high-quality (train query, document) pairs are retained, we only consider neighboring queries for a synthetic query that have a cosine similarity greater than $\tau$, which is tuned as a hyper-parameter. 
For nearest neighbor search, we employ an Approximate Nearest Neighbor Search (ANNS) algorithm \cite{malkov2018efficienthnsw}, which can be run efficiently on CPUs and can scale to large datasets. 
We use a finetuned dual encoder \cite{dahiya2023ngame} on the given training dataset as the embedding space in which we calculate nearest neighbors. 
Once we have these new synthetic (train query, document) pairs, we combine them with the ones already in the training dataset to create a new training dataset and finally train any XC model on this.

The above two steps complete the basic components of our method. As noted previously, Step 1 is computationally expensive, whereas Step 2 can be performed efficiently. Thus, we now discuss how to scale Step 1 for practical XC workloads.

\subsection{Scaling synthetic query generation in \alg}
\label{sec:scaling_skim}
In order to scale synthetic query generation for large XC datasets, we employ SLMs for query generation.
Directly prompting a pretrained SLM results in inferior generation quality (see ablation in Appendix \ref{sec:appendix-ablations}, Figure \ref{fig:gpt4_pt_ft_llama}).
To maintain similar generation quality as LLMs, one could possibly try to distill/finetune SLM using an LLM for query generation task, but due to SLM's limited number of parameters, it may never be able to store the the diverse world knowledge across multiple domains.
We thus face two challenges: (i) a lack of parametric world knowledge compared to LLMs, and (ii) a pre-trained SLM is not be well-suited for the specific task at hand (see Fig. \ref{fig:gpt4_pt_ft_llama} in appendix). 
To address the first issue, we supplement the SLM with auxiliary data that can help in accurate and reliable query generation. 
Fortunately, for retrieval tasks, our key insight is that the documents have readily available unstructured metadata text associated with them, containing knowledge to recover the missing labels. 
For example, in \orcas where documents are URLs, we can use webpage text as associated metadata; or in query-ad keyword retrieval datasets, we can use the ad landing page for the keyword; or for Wikipedia titles, we can use their article content etc.
For the second issue, we distill the specific task-solving ability from the LLM into the SLM.
To be concrete, the specific task we want to distill to the SLM is: \textit{Given a document and unstructured metadata text about the document, generate diverse synthetic queries that can fill the knowledge gaps in the dataset.}

\textbf{\textit{Task-specific distillation.}} To collect distillation data for SLM, we instruct the large language model (LLM) using the same prompt developed in Step 1. Since the task will ultimately be performed by an SLM, we explicitly instruct the LLM to \textit{only} utilize the knowledge present in the metadata for synthetic query generation, and to \textit{not} use its own parametric world knowledge. This ensures that the SLM can solve this task using the available metadata only. Additionally, this inherently demands the LLM (and eventually the distilled SLM) to filter out non-relevant text from the unstructured metadata text (refer to Figure \ref{fig:slm_rephrasing_metadata_figure} in the appendix).

For curation of distillation data, we use GPT-4, collecting approximately 50K responses per dataset, generating $\sim$10 synthetic queries per example/document. 
We then distill this particular task to an SLM via vanilla supervised fine-tuning (SFT) using Low-Rank Adaptation (LoRA).
The SLM achieves close to the LLM generation quality after finetuning (see Fig. \ref{fig:gpt4_pt_ft_llama} in appendix).
Our experiments use two SLMs: Llama2-7B \cite{touvron2023llama} for the primary experiments, and Phi1.5-1.3B \cite{javaheripi2023phi} for ablation studies. 
Detailed information on SFT and LoRA hyper-parameters, distillation time, and hardware used is provided in Appendix \ref{sec:skim-impl-details}.



\textbf{\textit{Large-scale inference.}} Once we have the distilled SLM, we perform large-scale inference on all the documents (alongside their associated metadata) to generate synthetic queries. 
With these generated synthetic queries for each document, we can then proceed to the original Step 2 discussed in the previous section. Refer to Appendix \ref{sec:skim-impl-details} for details on SLM inference time and hardware. 
In this way, we are able to perform Step 1 in \alg accurately at scale. This reduces the generation time from several weeks/months to under a day for XC datasets, and at the same time, maintaining high-quality of generation similar to that of LLMs (see Fig. \ref{fig:gpt4_pt_ft_llama} in appendix).

\section{Experimental Results}



We now discuss the experimental setup, including datasets, evaluation and key observations. We also assess the real-world applicability of \alg through offline and online tests on a search engine. 

\subsection{Setup}
\textbf{Datasets}: 
We benchmark \alg on two public XC datasets, \orcas \cite{Dahiya23bDEXA, craswell2020orcas} and \wiki \cite{buvanesh2024enhancing}.

 \textit{\orcas}: This dataset involves mapping short user search queries to URLs of web pages that answer those queries. The training set is curated from click logs of the Bing Search engine \cite{craswell2020orcas}, thus making it prone to missing labels. To ensure unbiased evaluation, we build a test set using human-labelled queries  from the TREC-19 and TREC-20 Deep Learning competitions \cite{craswell2020overview}. Web-page text for a URL (document) is used as unstructured metadata.

 \textit{\wiki}: This dataset involves mapping Wiki-pedia titles to their categories. 
Since Wikipedia categories are human-annotated, the degree of missing labels is relatively less severe. 
Therefore, we assume this dataset to be relatively unbiased and simulate a controlled click bias (only in training set) using a pre-trained \texttt{msmarco} model \footnote{\href{https://huggingface.co/sentence-transformers/msmarco-distilbert-base-v4}{https://huggingface.co/sentence-transformers/msmarco-distilbert-base-v4}}. 
Specifically, only those documents that appear in the top-K ($K=200$) predictions of the pre-trained \texttt{msmarco} model are considered in the biased training set.
Note that our design choices are informed by production settings where only some documents predicted by some deployed model are shown to the user.
The article text corresponding to the Wikipedia Title (query) is used as metadata.
More details related to datasets and simulation are in Appendix ~\ref{sec:curations-stats}.

    


\textbf{Baselines}: We compare \alg against two classes of baselines: those that do not use external knowledge and those that do. Within baselines without access to external knowledge, we compare to the propensity-based approach: Inverse Propensity Scoring (IPS) \cite{jain2016extreme, qaraei2021convex}, and imputation methods like Gandalf \cite{kharbanda2023gandalf} and LEVER \cite{buvanesh2024enhancing}. To explore a combination of propensity and imputation methods, similar to doubly robust techniques \cite{li2023stabledr}, we compare against a baseline that combines IPS with LEVER. 
Finally, for methods having access to external knowledge, we evaluate \alg against a slightly modified version of UDAPDR \cite{saadfalcon2023udapdr}, an SLM-based rewriting method tailored for our use case. 
The above methods are applied on two state-of-the-art base XC models: one-vs-all extreme classifier \renee \cite{jain2023renee} and a dual encoder \dexml \cite{gupta2024dualencoders}.
More implementation details are in App.~\ref{sec:baseline-details}.



\textbf{Evaluation}: Since unbiased evaluation is a hard problem as pointed in \citet{schultheis2022missing}, we employ a mixture of evaluation methods to reliably judge model performance. 
Our main test sets correspond to an unbiased setting:  evaluation on human annotated test sets in case of \orcas and evaluation on the original test set in \wiki.
Since we focus on improving retrieval performance, we evaluate our method and baselines using $Recall@K (R@k)$  with $K=25,100$ i.e. higher values of K. 

\subsection{Main Results }
\label{sec:main-results}
Now we discuss main observations from our results in Table \ref{tab:main-table}. 




\textbf{\alg outperforms state-of-the-art XC methods.} On both public datasets, \alg, when used with any XC model, outperforms all baselines on unbiased test sets. In \orcas, where the training dataset exhibits real-world missing label bias, \alg  outperforms the closest baseline by an average of \textbf{4.68} absolute points in $R@100$ and \textbf{3.60} absolute points in $R@25$. Similarly, in \wiki, \alg outperforms the closest competitor by \textbf{8.27} absolute points in $R@100$.

\textbf{Importance of World Knowledge.} All baseline methods (except UDAPDR) do not incorporate world knowledge, and thus suffer from inferior performance, with their results being in similar ranges (they have $R@100$ around 40\% in \orcas).
Methods such as LEVER \cite{buvanesh2024enhancing} and Gandalf \cite{kharbanda2023gandalf} rely on biased training data itself for imputation, thereby lacking world knowledge, crucial for predicting missing knowledge items. 
While UDAPDR \cite{saadfalcon2023udapdr} attempts to incorporate external knowledge by using an SLM, the observed improvements are sub par due to its limited parametric knowledge.

\textbf{Incorrect Conclusions from Standard Offline Test Sets.}
Table \ref{tab:wiki-suppl-results} (found in appendix) reports the results on the biased test set on the dataset \wiki. We observe that on \dexml, methods that do not incorporate external knowledge have similar $R@100$ ($\sim$43) on the biased test set, while \alg has a $R@100$ of 37.64. However, on the unbiased test set, \alg outperforms the nearest baseline by $>8$ absolute points.
Similar observations hold true in case of baselines using \renee, where \renee + LEVER achieves the highest biased $R@100$, but has unbiased $R@100$ significantly inferior to \renee + \alg (more than 13 points).
This can lead to erroneous conclusions about the performance of a method. 
More results on the biased test sets are Tables \ref{tab:orcas-suppl-results} and \ref{tab:wiki-suppl-results} (Appendix).

\begin{table}[t]
\caption{Recall on unbiased test sets 
using two base XC models: \renee and \dexml. 
When compared to methods (Gandalf, LEVER, LEVER + IPS) that do not use any external information, \alg outperforms the closest baseline by 10.16 points in $R@100$. 
``-'' indicates the method is not supported. }
\centering
\footnotesize
\begin{tabular}{lcccccc}
\toprule
 & \multicolumn{3}{c}{\textbf{\orcas}} & \multicolumn{3}{c}{\textbf{LF-WikiHT-2M}} \\
\cmidrule(r){2-4} \cmidrule{5-7}
 \textbf{Method} & \textbf{R@10} & \textbf{R@25} & \textbf{R@100} & \textbf{R@10} &  \textbf{R@25} & \textbf{R@100} \\
\midrule
 \dexml & 21.77 & 28.32 & 39.08 &  8.27 & 10.87 & 15.25 \\
+ Gandalf  &22.39 & 29.87 & 40.52 &  8.23 & 10.73 & 15.01 \\
+ LEVER  &  21.70 & 29.26 & 40.45 &  7.57 & 9.89 & 13.78 \\
+ IPS   & 22.31 & 29.14 & 39.42 &  8.10 & 10.92 & 15.92 \\
+ LEVER + IPS  &  23.47 & 30.92 & 41.04 &  6.82 & 9.03 & 12.92 \\
+ \slmaug  & 26.42 & 33.71 & 47.70 &  6.83 & 10.27 & 17.15  \\
+ \alg (Ours)  & \textbf{26.99} & \textbf{36.90} & \textbf{49.60} &  \textbf{11.13} & \textbf{16.59} & \textbf{25.42} \\
\midrule
\renee   & 21.75 & 28.22 & 38.42 &  7.89 & 9.95 & 13.07 \\
+ Gandalf  & 20.89 & 27.58 & 34.66 &  8.46 & 11.08 & 15.30  \\
+ LEVER  &  22.58 & 28.40 & 39.52 &  8.59 & 11.15 & 15.98\\
+ IPS  & 19.67 & 24.38 & 31.08 &  9.12 & 12.01 & 16.75 \\
+ LEVER + IPS &  21.52 & 29.24 & 37.10 &  9.23 & 12.63 & 19.28 \\

+ \slmaug    & 24.71 & 34.02 & 44.93 &  - & - & - \\
+ \alg (Ours) & \textbf{27.20} & \textbf{38.04} & \textbf{52.39} &  \textbf{11.18} & \textbf{17.55} & \textbf{28.32}  \\

\bottomrule
\end{tabular}
\label{tab:main-table}
\end{table}

\subsection{Application to Sponsored Search}
\label{sec:sponsored_search_main}
We demonstrate the large-scale real-world applicability of \alg in sponsored search, where the task is to match user queries to a subset of relevant advertiser bid keywords, from potentially billions of keywords. 
This is a challenging problem which is further exacerbated by the systematic missing label bias in click logs, making it ideal for testing \alg.
To reliably evaluate \alg, we do offline evaluation using a proprietary filter model and conduct online A/B tests on live search engine traffic.


\textbf{Offline evaluation.} Offline experiments are conducted on the proprietary LF-Query2Keyword-10M dataset, curated from clicks on a popular search engine. This dataset contains around 140M user queries and 10M ad keywords.
The full landing page content corresponding to the ad keyword was treated as the unstructured metadata in \alg. 
We compare \renee + \alg with the deployed production model (Prod-Model), \renee \cite{jain2023renee} and \renee + LEVER \cite{buvanesh2024enhancing}.
We report $Hits@K$ where $K=50,100$, which is the number of pairs judged relevant among the top-K predictions by the proprietary filter model, averaged over all test queries.
\alg outperformed the closest baseline by $6$ points in $Hits@100$ and $3$ points in $Hits@50$. This goes on to show not just the effectiveness of \alg, but also its large-scale real-world applicability for industry applications. 
More details about the proprietary filter model can be found in the Appendix \ref{sec:sponsored_search_appendix}.


\begin{table}[tbp]
\centering
\caption{Offline evaluation results on proprietary query-keyword dataset LF-Query2Keyword-10M. \alg outperforms LEVER by more than $6$ absoulte points in $Hits@100$.}
\footnotesize
\begin{tabular}{lcc}
\toprule
Method & \textbf{Hits@50} & \textbf{Hits@100} \\
\midrule
Prod-Model & 27.50 & 46.10 \\
\renee & 31.80 & 52.40 \\
\renee + LEVER & 31.20 & 52.20 \\
\renee + \alg & 34.50 & 58.80 \\
\bottomrule
\end{tabular}
\label{tab:results}
\end{table}

\textbf{Online A/B test.} For live deployment, \alg was trained on a larger dataset similar to LF-Query2Keyword-10M with around 180M advertiser bid keywords and 170M user queries mined from click logs. 
A \renee + \alg model was deployed on live traffic on a popular search engine to conduct an A/B study. 
\alg increased the keyword density (number of predicted keywords which pass the various relevance and business filters in the sponsored search stack) by 16\% indicating its ability to bring in good diverse keywords previously missed by an ensemble of state-of-the art dense retrievers, generative models, and XC algorithms.
\alg improved the impression-yield (avg. number of ads shown per query) by 1.02\% and the click-yield (avg. number of ads clicked per query) by 1.23\%. All improvements are   statistically significant with p-value  < 0.001. 


\subsection{Ablations}
\label{sec:main-ablations}


We conduct ablation studies to justify the design choices in \alg. 
Specifically, we (i) demonstrate the importance of metadata in Step 1 of \alg, (ii) show that retrieval augmentation (RA) \cite{asai2024reliable} may be ineffective with biased training data, (iii) impact of SLM size on \alg, and (iv) ineffectiveness of propensity-based methods.  
Further ablations on the the number of effect of new synthetic training pairs generated by \alg, and the use of a pre-trained SLM for Step 1 of \alg are provided in Appendix \ref{sec:appendix-ablations}.

\textbf{Significance of unstructured metadata}: We generate synthetic documents/queries using metadata and without metadata, and then compare the performance of the downstream trained XC models (see Table \ref{tab:meta-vs-no-meta}). The results reveal a significant gap in performance between the two scenarios, underscoring the critical role of metadata in recovering missing knowledge, even in unstructured form. 
Interestingly, without metadata, we see a marginal gain (compared to vanilla training on biased data) owing to \textit{some} parametric world knowledge present in the SLM.

\begin{table}[tbp]
    \caption{Effect of providing meta-data to the SLM during the task-specific-distillation and large scale inference step. Providing meta-data compensates for lack of world-knowledge of the SLM and improves $R@100$ by 2.7 points on average.
    }
    \label{tab:meta-vs-no-meta}
    \centering
    \footnotesize
    \begin{tabular}{cccccc}
    \toprule
     &  & \multicolumn{2}{c}{\textbf{\orcas}} & \multicolumn{2}{c}{\textbf{LF-WikiHT-2M}} \\
    \cmidrule(r){3-4} \cmidrule(r){5-6}
    \textbf{Model}& \textbf{Metadata} & \textbf{R@25} & \textbf{R@100} & \textbf{R@25} & \textbf{R@100} \\
    \midrule
    \multirow{2}{*}{\dexml} & \xmark & 35.23 & 46.35 & 14.46 & 24.03 \\
     & \cmark & 36.90 & 49.60 & 16.60 & 25.40 \\
    \multirow{2}{*}{\renee} & \xmark & 34.54 & 48.80 & 13.58 & 24.29 \\
     & \cmark  & 38.00 & 52.40 & 17.60 & 28.30 \\
    \bottomrule
    \end{tabular}    
\end{table}

\textbf{Metadata-Augmented XC does not work}:
Recently, retrieval-augmentation (RA) works have improved performance in situations involving limited knowledge \cite{asai2024reliable}, that is, directly providing relevant knowledge as input to the retriever/generator improves performance.
In this ablation (Table \ref{tab:wiki_ra_doesnt_work}), we test what happens if one employs metadata augmentation, i.e., directly provide the relevant knowledge (metadata text) concatenated with the query text as input to the XC model.
In a way, we are placing an upper bound on RA since we always provide the relevant information as input alongside the query.
Surprisingly, we observe that metadata augmented (or retrieval-augmented) XC model underperforms.
This highlights an important point that one should de-bias the training dataset for RA to work. 
However, if we train using synthetic data generated by \alg, we see a significant improvement ($+17$ points) in $R@100$.
Interestingly, the XC model starts generalizing using the metadata provided as input and gives better performance as compared to the model trained using \alg but without metadata-augmentation (compare rows 2 and 4 in Table \ref{tab:wiki_ra_doesnt_work}).
This  shows that RA-like methods would be complementary to our approach and RA might fail if one does not de-bias the training dataset effectively.
For more details see Appendix \ref{sec:appendix-ablations}.

\begin{table}[tb]
\caption{Effect of Retrieval Aug. (RA) alongside \alg on unbiased performance on \wiki dataset. For each configuration, we train a \renee model. When using RA, metadata is passed as input alongside the query during both train and test time.}
\centering
\footnotesize
\begin{tabular}{cccc}

\toprule
\textbf{Retrieval Aug.} &\textbf{ \alg }&  \textbf{R@25} & \textbf{R@100} \\
\midrule
\xmark & \xmark & 10.00 & 13.10 \\
\xmark & \cmark & 17.60 & 28.30 \\
\cmark & \xmark & 15.57 & 20.60 \\
\cmark & \cmark & 27.72 & 45.98 \\
\bottomrule
\end{tabular}

\label{tab:wiki_ra_doesnt_work}
\end{table}

\textbf{Size of the SLM used in \alg}: Table \ref{tab:slm_performance}, shows the effect of the size of SLM used in Step 1 of \alg algorithm. We observe that even when the size of the SLM is decreased from 7B model to a 1.3B model the drop in $R@100$ is marginal drop ($<1\%$). This makes \alg extremely practical in limited compute.

\begin{table}[h!]
\caption{Effect of the size of the SLM used in \alg's synthetic query generation step (i.e. Step 1) on the downstream unbiased performance of \renee + \alg. We use \orcas dataset for this ablation. This is only a minor drop ($< 1\%$) when decreasing the size of the SLM from 7B to 1.3B parameters, making \alg extremely practical even in low training compute regimes.}
\centering
\footnotesize

\begin{tabular}{lcc}
\toprule
\textbf{SLM} & \textbf{R@25} & \textbf{R@100} \\
\midrule
Phi-1.5 (1.3B) & 37.28 & 51.66 \\
Llama2 (7B) & 38.00 & 52.4 \\
\bottomrule
\end{tabular}

\label{tab:slm_performance}
\end{table}

\textbf{Adding missing labels is more effective than training with accurate propensity estimates:}
Propensity-based methods are the most common way of addressing missing label bias in XC. 
To examine their limitations, we simulate a data collection process from real world applications where a deployed model retrieves a shortlist of documents and users click on them. 
We study the performance  of different models under varying degree  of missing labels, when we have access to accurate propensities.
This simulation is performed on \wiki dataset, where the degree of exposure of the deployed model is controlled by adjusting the top-K parameter. 
After top-K thresholding, a stochastic missing mechanism was applied to simulate user clicks, where a relevant query-document pair occurs (or clicked) in the dataset with probability $p_{ql} = \sigma(s(q,l))$, with $\sigma(.)$ as the sigmoid function and $s(q,l)$ as the score assigned by the pretrained \texttt{msmarco} model. 
Note that this simulation process is similar to click models considered in \citet{saito2020unbiased}, where the probability of a click is modeled as the product of exposure and relevance probabilities (MNAR setting).
Once the simulations are done, a dataset is created for every K. For every such dataset, we train \renee, \renee using IPS that make use of ground-truth $p_{ql}$ as propensities, and finally, \renee + \alg. 
Fig. \ref{fig:propensity_ablation} in the Appendix shows the effect of varying $K$ on different models trained on that particular dataset. 
As seen from the graph, reweighing the \renee loss function using $p(q, l)$ (ground-truth propensities) improves the $R@100$ of the \renee model by 2\%, but filling in the missing knowledge (or missing labels) using \alg further improves $R@100$ by 13 points when $K=200$. 
To further highlight the importance of having unbiased training data, we consider a training dataset based on randomly sampling an equivalent number of (query, documents) pairs from all the relevant pairs known in \wiki (MAR setting).
We see that even when 2.8\% of the total data is exposed using random sampling on relevant pairs, the $R@100$ is around 37, compared to 12 when biased data is used.

\begin{figure}[htbp]  
    \centering  
    \includegraphics[width=0.5\textwidth]{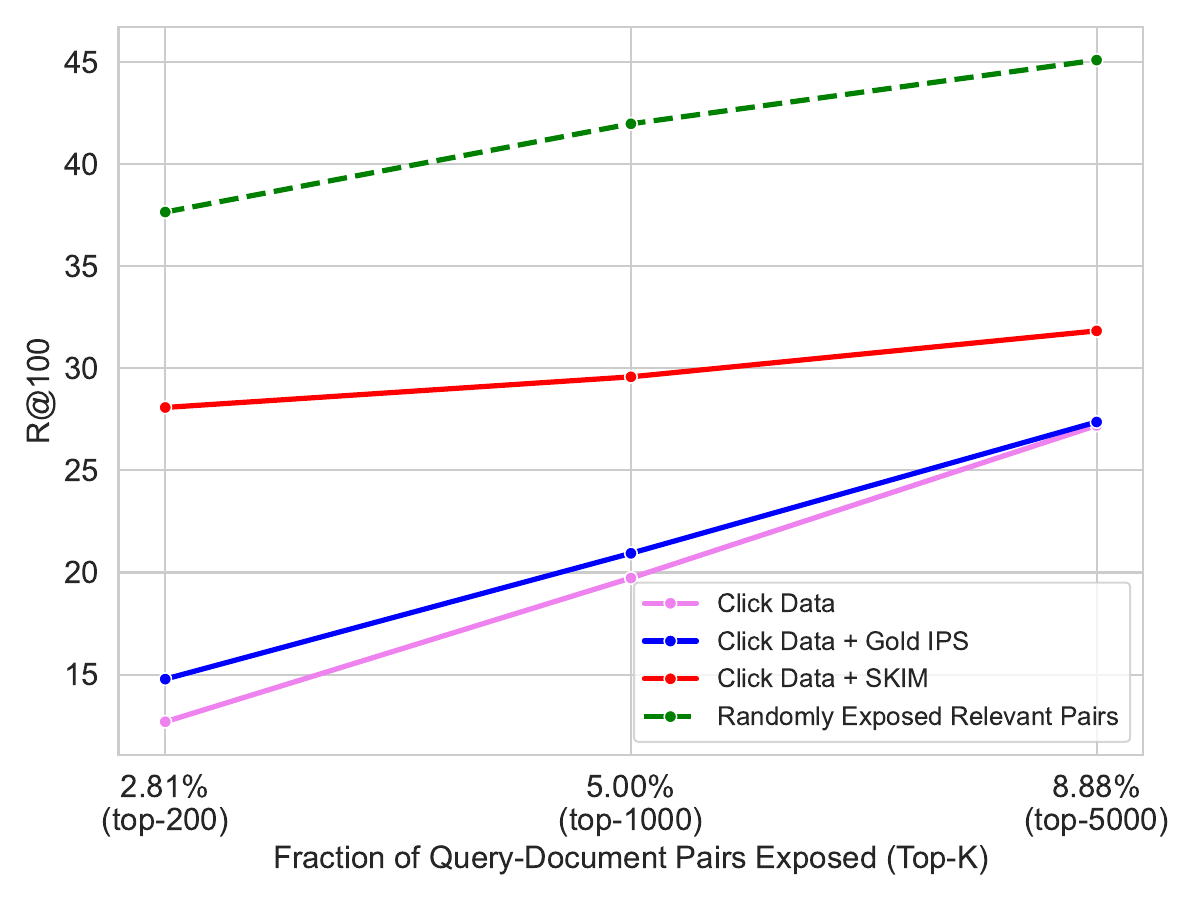}  
    \caption{X axis represents increasing fraction of relevant pairs or clicks, and in brackets, we show the top-K that was used in simulation to collect that fraction of clicks. We compare \renee models trained with (a) only click data (MNAR), (b) click data + using IPS with \textit{golden} propensities, (c) click data + \alg and finally, (d) click data created using sampling relevant pairs uniformly at random (MAR). All four settings are compared across different fraction of relevant pairs being exposed.}  
    \label{fig:propensity_ablation}  
\end{figure}

\section{Takeaways}
Our findings motivate the following observations for practitioners working with real-world XC problems or large-scale retrieval in general.

\textit{1. More data is not more knowledge.} In real world applications, more data is typically collected by increasing the time period in which clicks are collected using a deployed model.
While this may lead to more query-document pairs, this doesn't guarantee additional knowledge in the newly collected dataset since the predictions of the deployed model is limited by the knowledge present in its training dataset. Thus, this cycle continues without adding any new beneficial knowledge, and consequently does not add new document labels representative of missing knowledge. As noted earlier (see Table \ref{tab:main-table}, Figure \ref{fig:propensity_ablation} in the appendix), exposure to new knowledge is important to break this cycle, and \alg takes the first step in this direction.

\textit{2. Building better offline evaluation.} As discussed in Sec \ref{sec:main-results} relying on existing biased test sets for evaluation leads to misleading conclusions. \citet{schultheis2022missing} suggest the use of human labelled sets for reliable evaluation, but they are expensive to collect. However, with the advancements in LLMs, using them as reliable evaluation tools \cite{llms_test1, llms_test2, thomas2024large} can be a promising option. That being said, care should be taken while evaluating to not fit to the biases present in LLMs.



\section{Conclusion}


This paper studied a novel connection between the missing label issue in XC and the availability of world knowledge in the training data. The standard debiasing techniques such as propensity-scoring and imputation were found to be ineffective in mitigating knowledge-dependent systematic missing labels. To address this, it presented a novel approach, \alg, a debiasing algorithm that addresses the missing knowledge problem at scale by using an SLM coupled with meta-data. Experimental results on multiple XC tasks showed strong improvements in Recall metrics when \alg was applied to leading XC models. The real-world applicability of \alg was demonstrated using an online A/B test on a popular search engine.

\section{Acknowledgements}
We would like to thank Neelabh Madan for his useful suggestions and feedback during paper writing.
\clearpage
\bibliographystyle{acmrefstyle}
\balance
\bibliography{main}
\onecolumn
\begin{appendix}

\section{Theoretical Proofs}

\begingroup
\addtocounter{theorem}{-3}

\begin{lemma}
If $y_{il}=0 \eqnspace \forall (\vec x_i,\vec z_l) \in D_m$, then for any test pair $(\vec x,\vec z) \sim D_m$, $R(\vec x,\vec z,k_m) \indep \func D$ where $\func D = \{\{\vec x_i,\vec y_i\}_{i=1}^{N},\{\vec z_l\}_{l=1}^L\} $ is the training dataset.
\end{lemma}
\begin{proof}
As $R(\vec x,\vec z,k_{\vec x, \vec z})$ is a deterministic function of its parameters, the following conditional independence holds true:
\begin{equation}
\label{eqn:lemma_eqn_1}
\{\{\vec x_i,\vec y_i\}_{i=1}^{N},\{\vec z_l\}_{l=1}^L\} \indep \vec x, \vec z, k_m | (\func K \setminus k_m)
\end{equation}
The independence w.r.t $\vec x, \vec z$ is due to sampling independence. The independence w.r.t $k_m$ arises because, as since $y_{il}=0 \eqnspace \forall (\vec x_i,\vec z_l) \in D_m$, the rest of the documents in $\func D$ depend only on $\func K \setminus k_m$ owing to Postulates ~\ref{postulate:externality} and ~\ref{postulate:sparsity}.

Additionally, the following also holds true owing to Postulate~\ref{postulate:Incompressibility} and independent sampling:
\begin{equation}
\label{eqn:lemma_eqn_2}
\vec x,\vec z,k_m \indep (\func K \setminus k_m)
\end{equation}

By combining (\ref{eqn:lemma_eqn_1}) and (\ref{eqn:lemma_eqn_2}) through the contraction lemma of independence, and then using the determinism of $R(\vec x,\vec z,k_{\vec x, \vec z})$ we get the desired:

\begin{equation}
\label{eqn:lemma_eqn_3}
R(\vec x,\vec z,k_m) \indep \{\{\vec x_i,\vec y_i\}_{i=1}^{N},\{\vec z_l\}_{l=1}^L\}.
\end{equation}
\end{proof}

\begin{corollary}
If $y_{il}=0 \eqnspace \forall (\vec x_i,\vec z_l) \in D_m$, then for any test pair $(\vec x,\vec z) \sim D_m$, $R(\vec x,\vec z,k_m) \indep \func M$; where $\func M $ is a predictive model trained deterministically from $\{\{\vec x_i,\vec y_i\}_{i=1}^{N},\{\vec z_l\}_{l=1}^L\}$.
\end{corollary}

\begin{proof}
As $\func M$ is a deterministic function of $\{\{\vec x_i,\vec y_i\}_{i=1}^{N},\{\vec z_l\}_{l=1}^L\}$, the following holds true:

\begin{equation}
\label{eqn:corollary_eqn_1}
\func M \indep \vec x,\vec z,k_m | \{\{\vec x_i,\vec y_i\}_{i=1}^{N},\{\vec z_l\}_{l=1}^L\}
\end{equation}

By combining (\ref{eqn:corollary_eqn_1}) with (\ref{eqn:lemma_eqn_3}) through the contraction lemma, we get the desired result.
\end{proof}

\begin{theorem}
Let $n$ be the number of clicked query-doc pair sampled for the training data, $B$ be the exposure bias while sampling, and $\bar{F}_m,p_m$ be defined as above. Then, for any $0 \le \delta \le 1$, with probability at least $1-\delta$, at least $max_{m=1}^K \frac{\bar{F}_m}{B} e^{-B. p_m. n} - \frac{1}{2 \sqrt{2n}} (\log{\frac{2K}{\delta}}^{\frac{3}{2}})$ fraction of the true relevant label distribution will correspond to systematic missing label region and therefore will be irrecoverable by supervised training.
\end{theorem}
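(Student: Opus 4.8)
The plan is to treat the amount of systematically missing knowledge as a random quantity determined by the $n$ i.i.d.\ click samples, lower-bound its expectation by an optimally chosen ``tail cut'', and then show it concentrates. First I would fix notation. By the Lemma, knowledge item $k_m$ is irrecoverable exactly when none of the $n$ sampled clicked pairs lands in $D_m$; write $N_m$ for the number of samples in $D_m$ and $\mathcal{I}=\{m : N_m=0\}$ for the index set of missing items. Under Postulate~\ref{postulate:sparsity} a clicked pair is necessarily relevant, and the exposure-bias model of Section~\ref{sec:preliminaries} makes the probability that a single sample lands a relevant click in $D_m$ equal to $B p_m$. Hence the quantity to bound from below is $S=\tfrac1B\sum_{m\in\mathcal{I}}p_m$, the missing mass measured against the true relevant label distribution (the $1/B$ converting observed clicked mass to true relevant mass), and by independence of the draws $\op P(m\in\mathcal{I})=(1-Bp_m)^n$, so $\mathbb E[S]=\tfrac1B\sum_{m=1}^{K}p_m(1-Bp_m)^n$.

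Next I would lower-bound this expectation. For any rank $m$, keep only the tail ranks beyond the cut, whose aggregate mass is $\bar F_m$; since $p_{m'}\le p_m$ on that tail we have $(1-Bp_{m'})^n\ge(1-Bp_m)^n$, giving $\mathbb E[S]\ge \tfrac{\bar F_m}{B}(1-Bp_m)^n$. Writing $(1-Bp_m)^n=e^{-Bp_m n}$ up to a lower-order discrepancy that is subsumed by the fluctuation term below (legitimate since the maximizing cut sits where $Bp_m=O(1/n)$, so $\ln(1-x)=-x+O(x^2)$ is tight), and optimizing over $m$, yields the leading term $\max_{m}\tfrac{\bar F_m}{B}e^{-Bp_m n}$. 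The maximization formalizes the trade-off between the shrinking tail mass $\bar F_m$ and the growing survival factor $e^{-Bp_m n}$.

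The crux, and where I expect most of the effort, is the concentration bound $S\ge \mathbb E[S]-\tfrac{1}{2\sqrt{2n}}(\log\tfrac{2K}{\delta})^{3/2}$ with probability at least $1-\delta$. Swapping one sample can change membership in $\mathcal{I}$ for at most two items, so $S$ has bounded differences at most $\tfrac{2}{B}\max_m p_m$; but McDiarmid with that constant is too weak, since $\max_m p_m$ need not be $O(1/n)$. The remedy is a head/tail decomposition at the threshold $p_m\asymp\tfrac{\log(2K/\delta)}{Bn}$: every ``head'' item has $\op P(N_m=0)=(1-Bp_m)^n\le e^{-Bp_m n}\le\delta/2K$, so a union bound over the at most $K$ items shows that, with probability at least $1-\delta/2$, $\mathcal{I}$ contains only ``tail'' items (and the expected mass forgone by restricting attention to the tail is negligible). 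On the tail the bounded-difference constant is $O(\log(2K/\delta)/n)$, and McDiarmid with the remaining $\delta/2$ budget gives a deviation of order $\tfrac{\log(2K/\delta)}{n}\cdot\sqrt{n\log(1/\delta)}\asymp\tfrac{(\log(2K/\delta))^{3/2}}{\sqrt n}$; pinning down the precise constant $\tfrac{1}{2\sqrt{2n}}$ is then routine bookkeeping. Combining the expectation bound with this two-sided accounting of the $\delta/2$ budgets delivers the theorem, and the two closing remarks follow by inspecting the closed form $\tfrac{\bar F_m}{B}e^{-Bp_m n}$: it is $\Theta(1)$ as soon as some $p_m=\Theta(1/n)$ sits on a fat tail (so $\bar F_m=\Theta(1)$ while $e^{-Bp_m n}=\Theta(1)$), and its dependence on $B$ is read off directly.
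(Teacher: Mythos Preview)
Your argument is correct and mirrors the paper's two-stage structure. The expectation lower bound via tail truncation and monotonicity of $(1-Bp_{m'})^n$ is identical to the paper's derivation. For concentration, the paper invokes an Extended McDiarmid inequality (bounded differences hold outside a bad set of probability $q=\sum_{m\le m'}(1-p_m)^n$) and then optimizes over the cutoff $p=p_{m'}$ to balance the two resulting terms, $2Ke^{-np}$ and $2e^{-2\epsilon^2/(np^2)}$, which is what produces the scaling $\epsilon\asymp n^{-1/2}(\log(2K/\delta))^{3/2}$. Your explicit head/tail split at $p_*\asymp \log(2K/\delta)/(Bn)$ followed by standard McDiarmid on the tail-restricted sum is the same mechanism unpacked, with the threshold pre-set to the value the paper's optimization selects; your route is marginally more elementary in that it avoids the black-box inequality, while the paper's optimization makes the origin of the $3/2$ exponent (balancing $np$ against $\epsilon^2/(np^2)$) more transparent. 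One small simplification available to your version: since $S\ge S_{\text{tail}}$ pointwise and the head contribution to $\mathbb{E}[S]$ is already $O(\delta/K)$, the separate union bound on head items is unnecessary---McDiarmid on $S_{\text{tail}}$ with the full $\delta$ budget already suffices for the lower tail.
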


\begin{proof}
Marginal probability of not sampling a knowledge item at all during training data generation is $(1-B.p_m)^n$. The associated probability of systematic missing labels then is $p_m (1-B.p_m)^n$

Now, the expected total loss due to all lost knowledge items, $\bar{E}$ is:
\begin{align}
\bar{E} = & \frac{1}{B} \op E \left[  \sum_{m=1}^K p_m (1-B.p_m)^n \right] \\
\ge & \frac{1}{B} \op E \left[ \sum_{m=m'}^K p_m (1-B.p_m)^n  \right] \\
\ge & \frac{1}{B} \max_{m=m'}^K \bar{F}_{m'} (1-B.p_{m'})^n  \\
\approx & \frac{1}{B} \max_{m=m'}^K \bar{F}_{m'} e^{-B. p_m'. n} 
\end{align}

where the last step uses the approximation that can be accurate when the value of $p_{m'}$ is small, {\it i.e.} on the long-tail.
\\
\\
Next, by using Extended McDiarmid's inequality ~\citep{Mohri12}, the value $E$ which is the probability of systematic missing label error for any random sample of dataset can be bounded as follows:

\begin{align}
\op P(E - \bar{E} \le \epsilon) \leq 2q + 2\exp \left\{ - 2 \frac{\left(\epsilon - q . np\right)^2}{np^2} \right\}
\end{align}

where for some $m'$, $p = p_{m'}$ and 

\begin{align}
q &= \sum_{m=1}^{m'} (1-p_m)^n \\
&\le m' (1-p_{m'})^n 
\end{align}

Plugging these back in Extended McDiarmid's inequality:

\begin{align}
\op P(E - \bar{E} \le \epsilon) \leq &\max_{m'=1}^K 2 m' (1-p_{m'})^n + 2 \exp \left\{ - 2 \frac{\left(\epsilon - q . np_{m'}\right)^2}{np_{m'}^2} \right\} \\
\le &\max_p 2 K (1-p)^n + 2 \exp \left\{ - 2 \frac{\left(\epsilon - q . np\right)^2} {np^2} \right\} \\
=& \max_p 2 K e^{n\log{(1-p)}} + 2 e^{\left\{ - 2 \frac{\left(\epsilon - q . np\right)^2} {np^2} \right\}} \\
\le& \max_p 2 K e^{-np} + 2 e^{\left\{ - 2 \frac{\epsilon^2} {np^2} \right\}} \\
\le& 2K e^{-2 n^{\frac{1}{3}} \epsilon^{\frac{2}{3}}}
\end{align}

As a result,

\begin{align}
\op P(E - \bar{E} \le \epsilon) \leq 2K e^{- n^{\frac{1}{3}} \epsilon^{\frac{2}{3}}}
\end{align}

In other words, for any small $\delta \le 1$, with probability at least $1-\delta$ on sampling the dataset $\func D$, the error $E$ of systematic missing labels is at least: $max_{m=1}^K \frac{\bar{F}_m}{B} e^{-B. p_m. n} - \frac{1}{2 \sqrt{2n}} (\log{\frac{2K}{\delta}}^{\frac{3}{2}})$. 

\end{proof}

\endgroup

\section{Long Tail of Knowledge required in XC Applications}

\begin{figure*}[ht]
    \centering
    \begin{subfigure}[b]{0.45\textwidth}
        \centering
        \includegraphics[width=\textwidth]{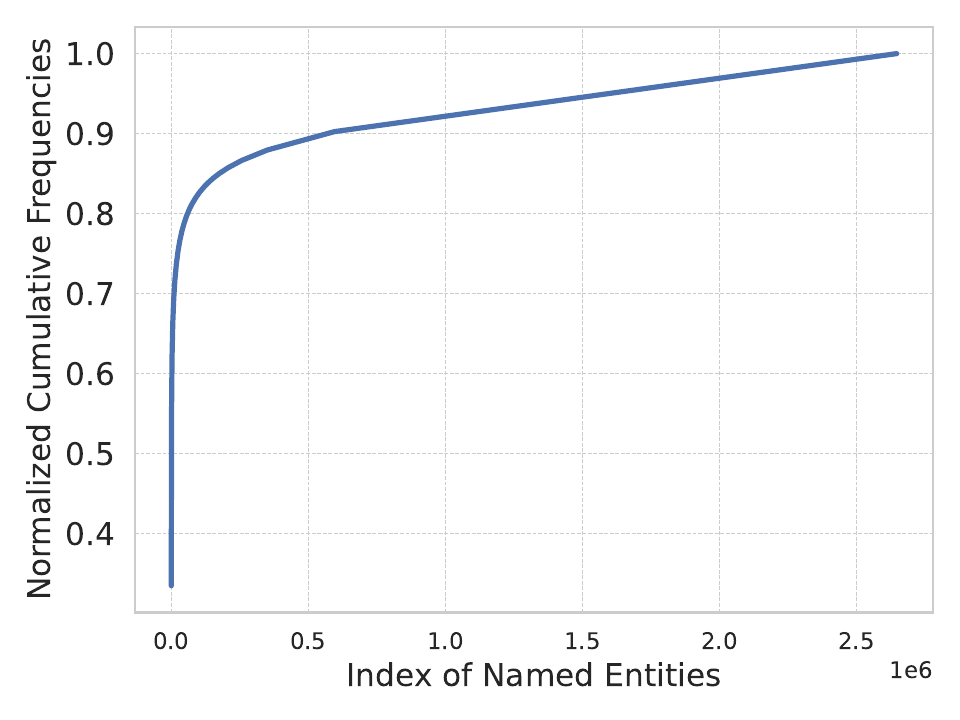}
        \caption{\orcas}
        \label{fig:figure1}
    \end{subfigure}
    \hfill
    \begin{subfigure}[b]{0.45\textwidth}
        \centering
        \includegraphics[width=\textwidth]{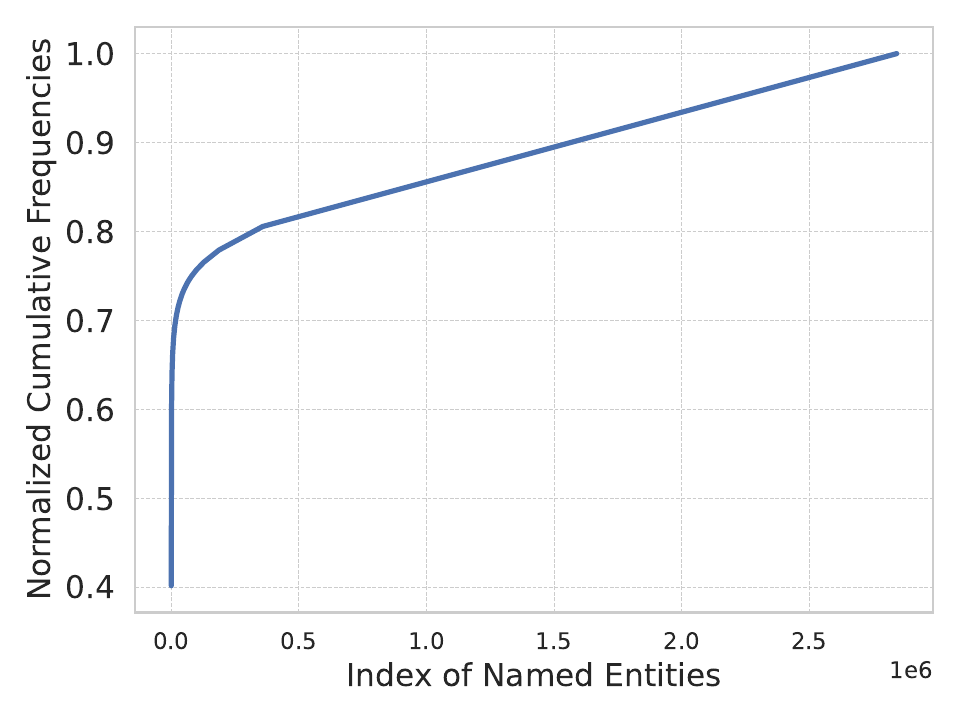}
        \caption{\wiki}
        \label{fig:figure2}
    \end{subfigure}
    
    \caption{We use named entity as a proxy for knowledge to show that XC applications require vast and long-tail knowledge. On the X-axis we plot the index of the named entities and on the Y-axis we show the normalized cumulative frequency of the named entities. In \orcas dataset \cite{Dahiya23bDEXA}, where the task is matching user queries to web-page URL, around 80\% of the knowledge is covered by 2.23\% entities. 
    While in the \wiki dataset where the task is to match Wikipedia titles to categories, around 11.28\% of the entities cover the same fraction of knowledge. 
    This highlights the extensive and long-tail knowledge required by XC applications.}
    \label{fig:long-tail-in-xc-applications}
\end{figure*}

\section{Dataset Curation and Statistics}
\label{sec:curations-stats}
This section describes the train-test splits used for training and evaluating different XC models. 
We benchmark on two public large-scale short-text XC datasets, namely \orcas and \wiki. We also evaluate \alg on the proprietary query to ad-keyword dataset curated from clicks logs of a popular search engine, we call it LF-Query2Keyword-10M. Below we outline the creation procedure and statistics for each dataset. \\

\noindent\textbf{\orcas:} We use the \orcas dataset released by \citet{dahiya2023deep} for XC, and originally proposed by \citet{craswell2020orcas}. The \orcas dataset models the task of mapping short user queries to web-page URLs. For example, a user query like "\textit{exons definition biology}" may be relevant to URLs like "\href{www.merriam-webster.com dictionary/exon}{\textit{www.merriam-webster.com dictionary/exon}}" and "\href{https://en.wikipedia.org/wiki/Exon}{\textit{en.wikipedia.org/wiki/Exon}}". For unbiased evaluation we curate a small unbiased test dataset using the human labelled test sets available on TREC-19 and TREC-20 Deep Learning competitions \cite{craswell2020overview}. Statistics of the dataset are provided in Table \ref{tab:dataset-stats}. \\

\noindent\textbf{\wiki:} This dataset was created by mining the Wikipedia dump \footnote{\href{https://dumps.wikimedia.org/enwiki/20240420/}{https://dumps.wikimedia.org/enwiki/20240420/}}. From the dump, we extract all Wikipedia titles and link each title to its associated categories and parent categories, as described in \cite{buvanesh2024enhancing}. We assume this dataset to be relatively unbiased since it was annotated by humans. To simulate click bias (controlled bias), we only include query-document pairs that appear in the top-200 predictions of a pre-trained \texttt{msmarco} retriever in the training set (refer Algorithm \ref{alg:dataset-curation}). This introduced bias mimics practical scenarios where users are shown a limited number of documents (or items) for a given query. Consequently, if an item is not in the top shortlist retrieved by the model, it will never be clicked and thus will not appear in the dataset curated from click logs. For unbiased evaluation, we use a test set that includes the complete ground truth without any simulation bias. Statistics of the dataset are provided in Table \ref{tab:dataset-stats}.

\begin{table}[h!]
\caption{Dataset statistics for train, biased test set, and unbiased test set. Due to lack of space, average points per document and average documents per point have been abbrevaiated as Av. PPL and Av. LPP respectively. \wiki has been shown as LF-WikiHT-2M. LF-Query2Keyword-10M is abbrevaiated as LF-Q2K-10M. Some details about the proprietary datasets has been redacted by ``-". We only provide approximate values for number of documents, train queries and test queries for LF-Q2K-10M.}
    \centering
    \begin{adjustbox}{max width=\textwidth}
    \begin{tabular}{lcccccccccc}
        \toprule
        Dataset & \# Documents & \multicolumn{3}{c}{Train Set} & \multicolumn{3}{c}{Biased Test Set} & \multicolumn{3}{c}{Unbiased Test Set} \\
        \cmidrule(lr){3-5} \cmidrule(lr){6-8} \cmidrule(lr){9-11}
         &   & \# Data Pts & Av. PPL & Av. LPP & \# Data Pts & Av. PPL & Av. LPP & \# Data Pts & Av. PPL & Av. LPP \\
        \midrule
        \orcas & 797322 & 7360881 & 16.133 & 1.747 & 2547702 & 5.676 & 1.776 & 88 & 1.009 & 27.750 \\
        LF-WikiHT-2M & 1966221 & 5418305 & 5.259 & 1.908 & 1355377  & 1.317 & 1.911 & 1355377 & 35.044 & 50.838 \\
        LF-Q2K-10M & $\sim$10000000 & $\sim$140000000 & - & - & $\sim$16000000  & - & - & $\sim$16000000 & - & - \\
        \bottomrule
    \end{tabular}
    \end{adjustbox}
    
    \label{tab:dataset-stats}
\end{table}

\begin{algorithm}[H]
\caption{Simulated bias dataset creation process for \wiki dataset.}
\label{alg:dataset-curation}
\textbf{Input:} Human annotated (\textit{unbiased}) dataset $\dataset$, Pre-trained \texttt{msmarco} model $\func M$, Parameter $k$ controlling the degree of exposure \\
\textbf{Ouput:} Biased Dataset $\dataset_{biased}$ \\

\For{\texttt{query} $q$ \texttt{in} $\dataset$}{
    $\func S_{q} \leftarrow$: Set of relevant documents for query $q$ in $\dataset$.\\
    $\func M_{q} \leftarrow$: Set of \texttt{top-k} predictions for query $q$ by model $\func M$.\\
    $\func T_{q} \leftarrow$: $\func S_q \cap \func M_q$

    \For{\texttt{doc} $d$ in $\func T_{q}$}{
        $\dataset_{biased} \leftarrow \dataset_{biased} \cup \{q, d\}$
    }
}
\end{algorithm}

\section{Baseline Implementation Details}
\label{sec:baseline-details}
This section describes the details of the baselines used in the main paper. We first describe both base XC methods, namely \renee and \dexml, and then move on to the baselines against which we compare \alg.

\subsection{Base XC Methods}
\noindent\textbf{\renee} \cite{jain2023renee}: This is a recent XC approach that makes use of one-versus-all (OvA) to represent each document. Query representations are derived from a 6-layer DistilBERT \footnote{\href{https://huggingface.co/docs/transformers/en/model_doc/distilbert}{https://huggingface.co/docs/transformers/en/model\_doc/distilbert}} encoder. We use the source code provided \href{https://github.com/microsoft/renee/tree/main}{here} for training \renee.

\noindent\textbf{\dexml} \cite{gupta2024efficacy}: This is a state-of-the-art dual encoder approach that derives the representations of both queries and documents using a 6-layer DistilBERT encoder. We use the provided source code \href{https://github.com/nilesh2797/DEXML}{here} for training \dexml. Due to the larger size of our datasets (800K documents in \orcas and 2M documents in \wiki), the hard negative mining configuration provided \href{https://github.com/nilesh2797/DEXML/blob/main/configs/LF-AmazonTitles-131K/dist-de-hnm_decoupled-softmax.yaml}{here} is used in all experiments.  

\subsection{Baselines}
We compare \alg with augmentation, propensity based and SLM-based rewriting approaches. Details of the baselines are provided below.

\noindent\textbf{Gandalf} \cite{kharbanda2023gandalf}: This is a graph-based data augmentation method that effectively models document-document correlations. As described in \cite{kharbanda2023gandalf}, we use a threshold of 0.1 to create the document-document augmentation matrix.

\noindent\textbf{LEVER} \cite{buvanesh2024enhancing}: This is a recent approach that improves the tail performance of any XC classifier by augmenting the ground truth using soft scores derived from a Siamese model. We use the source code provided \href{https://github.com/anirudhb11/LEVER}{here} for training LEVER.

\noindent\textbf{Inverse Propensity Scoring (IPS)} \cite{qaraei2021convex,jain2016extreme}: For propensity-based training, we derive per-document propensity estimates using the propensity model described in \cite{jain2016extreme}. For training \renee, we reweigh the positives using the factor described in \cite{qaraei2021convex}. In the case of \dexml, we simply use the propensity scores from \cite{jain2016extreme} to reweigh the positives.

\noindent\textbf{LEVER + IPS}: Drawing inspiration from the success of doubly robust methods \cite{li2023propensitymatters} that combine augmentation and propensity based methods, we create a baseline that combines reweighs the positives using IPS and augments the ground truth using LEVER. 

\noindent\textbf{\slmaug}: We used a slightly modified version of UDAPDR \cite{saadfalcon2023udapdr} as a baseline.
Specifically, we used a finetuned SLM on our generated GPT4 demonstrations but without metadata.
This is unlike \cite{saadfalcon2023udapdr, jeronymo2023inparsv2largelanguagemodels}, where the authors directly prompted a pretrained SLM for document/query generation. 
In fact, we found that directly prompting a pretrained SLM leads to drastically inferior document generation (see Figure \ref{fig:gpt4_pt_ft_llama}).
However, as done in \cite{saadfalcon2023udapdr, jeronymo2023inparsv2largelanguagemodels}, we do not employ a pretrained/finetuned re-ranker to filter the generated query-document pairs. 
We train the retriever on all the generated query-document pairs.
We provide the generated query-document pairs used in our experiments.

\section{\alg Implementation Details}
\label{sec:skim-impl-details}
This section discusses implementation details, hyperparameters used and the hardware employed for \alg. We cover: (i) task specific distillation of the SLM from LLM, (ii) Large-scale inference to generate synthetic queries (Step 1), and finally (iii) mapping synthetic queries to actual queries (Step 2) of \alg.

\textit{(i) Task-specific Distillation: } To create the distillation / finetuning data for the SLM, we use GPT4 as the LLM. We use the prompts mentioned in section \ref{sec:gpt4-llama-prompts}. We use the fast OpenAI access key to generate close to $\sim$40K-50K responses from GPT4. This takes a few hours to collect. Next, we use Low Rank Adaptation (LoRA) technique to finetune SLMs on GPT4 responses. We use the implementation provided here: \textcolor{blue}{\href{https://github.com/Lightning-AI/lit-gpt}{litgpt}} for finetuning SLM using LoRA. We finetune the SLM for a total of 2 epochs over the GPT4 responses for all datasets, using an effective batch-size of 64 and \texttt{bfloat16} weights, on H100 80GB GPU. We train \texttt{meta-llama/Llama-2-7b-hf} (Llama2 7B) for our main experiments, and train \texttt{microsoft/phi-1\_5} (Phi1.5 1.3B) for ablation.

\textit{(ii) Large-scale inference to generate synthetic queries:} Once we have the distilled / finetuned SLM, we run inference on all the (document, unstructured metadata) pairs present in our dataset to generate synthethic queries. For \orcas, we cap the tokens in the metadata to be 750 tokens, and limited the generated text to be at max 250 tokens. In \wiki, we cap the metadata to 1000, and the generations to 1000 tokens. We run large-scale inference of SLM to generate synthetic queries (Step 1 of \alg) again on RTX A6000 GPUs.

\textit{(iii) Mapping synthetic queries to actual queries (Step 2):} Once we have synthetic queries, we map them onto the actual queries using ANNS. We use the popular implementation of \citet{malkov2018efficienthnsw} provided here: \textcolor{blue}{\href{https://github.com/nmslib/hnswlib}{hnswlib}}. This takes a few minutes to calculate nearest neighbours on 10M document XC dataset on 96 core CPU machine without any GPUs. We linearly scale the normalized cosine score in $[0, 1]$, and then select the similarity threshold $\tau$ to filter the actual queries we add to the new training set for a document. For all datasets, we set default as $\tau=0.8$.

\subsection{Prompts for Synthetic Query (Document) Generation}
\label{sec:gpt4-llama-prompts}

We prompt GPT-4 and, subsequently, the fine-tuned LLAMA-2 on two datasets: \wiki and \orcas. For both datasets, we incorporate available metadata into the prompts. The GPT-4 prompts include a description of the task and manually curated in-context examples to ensure the generation of high-quality synthetic queries (or documents). The GPT-4 responses are then used to fine-tune the SLM. Once fine-tuned, the SLM inference is run by providing only the query (or document) and associated metadata, as through fine-tuning, the SLM has absorbed the task-specific information. The shorter SLM prompt also aids in scaling the to large-scale XC datasets.

\begin{figure}[htbp]
	\centering
\includegraphics[width=\linewidth]{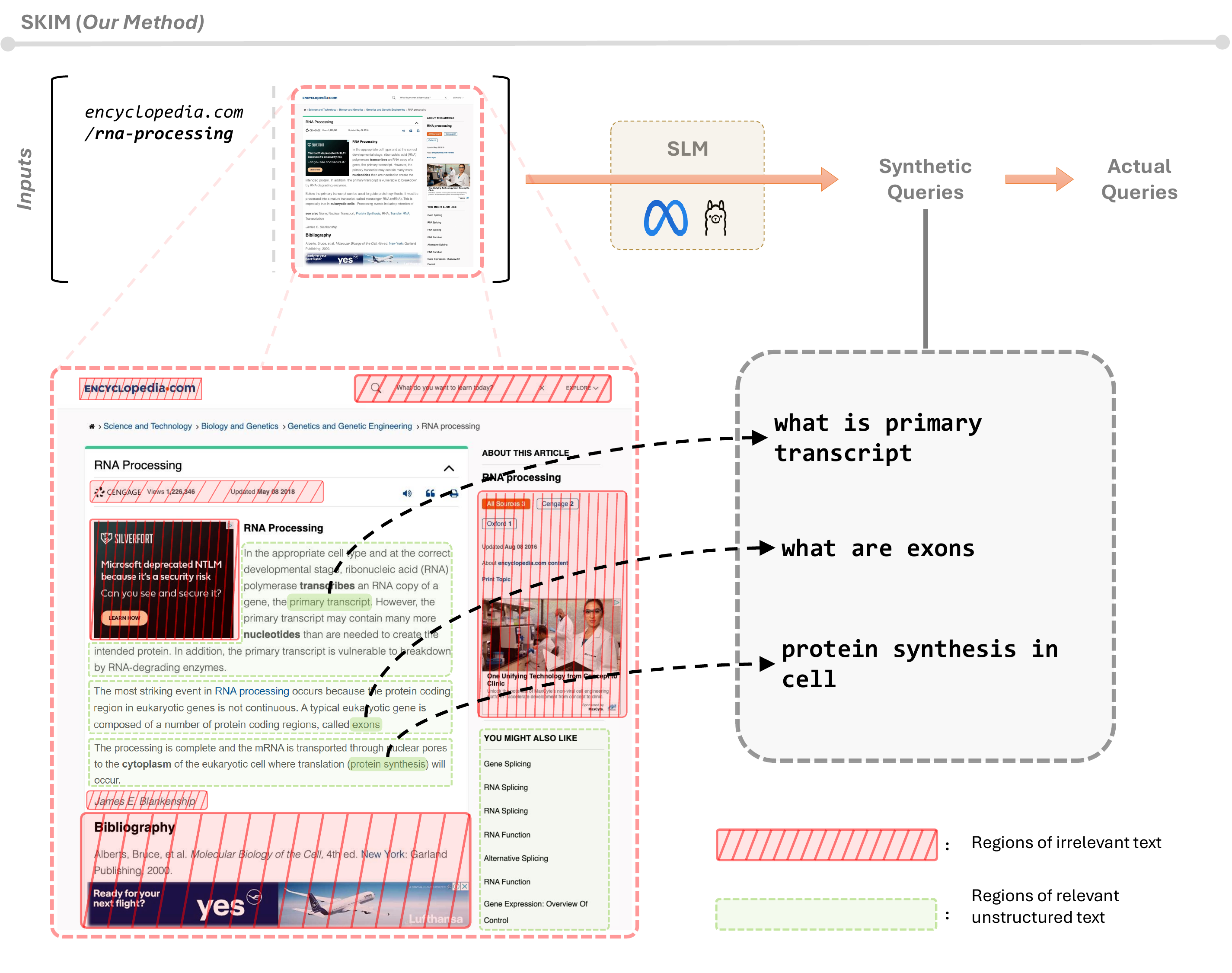}
	\caption{Intuitive visual explanation of how a finetuned SLM uses implicit knowledge contained in unstructured metadata to generate diverse synthetic queries that are representative of different knowledge concepts that could be absent in the training dataset. In other words, the finetuned SLM is able to \textit{skim} through the unstructured metadata, ignoring the non-relevant text, and generate only the relevant synthetic queries that are representative of different knowledge concepts about the document.}
\label{fig:slm_rephrasing_metadata_figure}
\end{figure}

\subsection{Time and Hardware requirements of \alg}

\begin{table}[h!]
\centering
\caption{Time taken and Hardware used in the different steps of \alg algorithm. In brackets, we mentioned the number of RTX A6000 48GB GPUs used for large-scale SLM inference. For SLM finetuning, we used H100 80GB GPU.}
\begin{tabular}{lcccc}
\toprule
 & Generating Examples from LLM (hrs) & Finetuning SLM (hrs) & SLM Inference (hrs) \\
\midrule
\orcas & 10 & 3 & 0.16 (128)  \\
LF-WikiHT-2M & 12 & 4 & 43.40 (128)  \\
\bottomrule
\end{tabular}
\label{tab:skim
-pipeline-time}
\end{table}

\section{Sponsored Search}
\label{sec:sponsored_search_appendix}
In this section we provide more details about the metadata and the proprietary filter model used in our sponsored search experiments.

\paragraph{Metadata used.} For every ad-keyword, we use the advertiser provided full landing page as unstructured metadata.

\paragraph{Proprietary Filter Model.} This model measures the relevance between a given query and an ad-keyword pair. This is implemented as a cross-encoder, initialized from BERT-Large, trained on large-scale human labelled (query, ad-keyword) pairs. 






\clearpage
\begin{tcolorbox}[lmprompt, title={\wiki GPT-4 Synthetic Query Generation Prompt.}]
\label{promptbox:gpt-wiki}
\#\#\# Task Instructions \eighttabs \fourtabs\tab\tab\texttt{/* \textbf{Describing the task} */} \\
You are given a Wikipedia article, specifically the title and body text of the article. Your job is to generate immediate Wikipedia categories as well as the Wikipedia categories for the generated immediate Wikipedia categories using the information provided in the body text of the article. 

When generating Wikipedia categories, make sure to follow below mentioned guidelines for a valid Wikipedia category:

1. Relevance: The categories you generate should be directly relevant to the topic of the Title. They should describe key aspects of the subject matter. For example, if given an Title about dogs, relevant categories might include "Mammals," "Pets," "Animal Behavior," or "Dog Breeds." Categories like "Astronomy" or "Cooking" would not be relevant to a dog-related Title.

2. Specificity: Wikipedia's category system is organized hierarchically, with broader categories containing more specific subcategories. Try to generate categories that aim to place the Title in the most specific category that applies.

First generate the immediate Wikipedia categories which are directly relevant to the article. After that, generate Wikipedia categories for the earlier generated immediate Wikipedia categories. Make sure to follow the guidelines that are mentioned above. 

Below are some examples that should provide more clarity about the task.

Example 1: \eighttabs\eighttabs \texttt{\textbf{/* In-Context Example */}}

\#\#\# Wikipedia Title\\
Sellankandal \eighttabs \eighttabs \tab \xspace \texttt{\textbf{/* Input query */}}

\#\#\# Wikipedia article begins \eighttabs \fourtabs \tab \tab \tab \texttt{\textbf{/* Meta-Data */}} \\
Sellankandal is a village situated 10 km inland from coastal Puttalam city in the North Western Province of Sri Lanka. It is the primary settlement of people of Black African descent in Sri Lanka called Kaffirs who until the 1930s spoke a Creole version of Portuguese. Most villages speak Sinhala and are found throughout the country as well as in the Middle East. The Baila type of music, very popular in Sri Lanka since the 1980s, originated centuries ago among this 'kaffir' community.\\
\#\#\# Wikipedia article ends \\

\#\#\# Task Output \eighttabs \fourtabs \tab  \texttt{\textbf{/* Synthetic Generated Documents */}} \\
\#\#\#\# Immediate Wikipedia Categories \\
Populated places in North Western Province, Sri Lanka \\
Populated places in Puttalam District \\
African diaspora in Sri Lanka \\

\#\#\#\# Wikipedia Categories for Immediate Wikipedia Categories \tab \texttt{\textbf{/* Example Category Hierarchy for this Query */}}  \\
Populated places in North Western Province, Sri Lanka \\
\tab Populated places in Sri Lanka by province \\
\tab Geography of North Western Province, Sri Lanka \\
Populated places in Puttalam District \\
\tab Populated places in North Western Province, Sri Lanka \\
\tab Populated places in Sri Lanka by district \\
\tab Geography of Puttalam District \\
African diaspora in Sri Lanka \\
\tab African diaspora in Asia \\
\tab Ethnic groups in Sri Lanka \\

Now perform the task for the following:

Please do not use any external knowledge to generate Wikipedia categories, the information must come only from the provided article, do not generate anything extra that is not there in the article.
**Make sure that the categories generated are actual Wikipedia Categories.**
Generate as much relevant immediate Wikipedia categories as possible.

\#\#\# Wikipedia Title\\
\{title\}

\#\#\# Wikipedia article begins\\
\{content\}\\
\#\#\# Wikipedia article ends

\#\#\# Task Output

\end{tcolorbox}

\begin{tcolorbox}[lmprompt, title={\orcas GPT-4 Synthetic Query Generation Prompt.}]
\label{promptbox:gpt4-orcas}
\# Task Instruction \eighttabs \fourtabs\tab\tab\phantom{xxx}\xspace\texttt{/* \textbf{Describing the task} */}

You are given a URL, along with some relevant search queries for this URL. Note that each of these relevant search queries is a "good" match for the given URL. Additionally, you are given some metadata in the form of the webpage text of this URL. You may use the relevant search queries and the webpage text to understand how the webpage text for this URL contains information that can answer or partially answer the relevant user queries. You need to solve the following task for this given URL. Please use only the information provided in the form of webpage text for the given URL, and the relevant search queries to solve the Task.

\#\# Query Generation from Webtext Task

Generate targeted search queries for the given URL using the webtext. The generated search queries contain tokens different from tokens of the URL. Generate upto 10 relevant search queries. Only generate new search queries, do not copy anything given in the relevant queries section.

Following are some examples for the task: \eighttabs \tab\tab\phantom{xxx} \texttt{\textbf{/* In-Context Example */}}

\#\#\# URL: \href{https://www.laserspineinstitute.com/back_problems/vertebrae/l3/}{https://www.laserspineinstitute.com/back\_problems/vertebrae/l3/} \fourtabs \tab\tab\texttt{\textbf{/* Input Document */}}

\#\#\# Relevant queries \eighttabs\tab\tab\phantom{xxx}\texttt{\textbf{/* Associated Queries in Training Data */}}\\
l3 compression fracture \\
l3-l4 symptoms \\
l3 and l4 vertebrae \\
l3 nerve root compression \\

\#\#\# Webpage text for URL begins \eighttabs\fourtabs\tab\tab \texttt{\textbf{/* Meta-Data */}}

Vertebrae Injury Vertebrae Fracture Back Vertebrae Vertebrae Pain Vertebrae Column Spinal Cord Vertebrae Between the Vertebrae Lumbar Spine Vertebrae Vertebrae Treatment Vertebrae Compression Vertebrae Compression Fracture Vertebrae Disc Vertebrae Nerve Spinal Column Vertebrae Vertebrae Surgery Neck Vertebrae Spine Vertebrae Spinal Vertebrae Compressed Vertebrae The L3 vertebra is located in the lumbar spine, which is in the lower back portion of the spinal column. The lumbar spine typically has five vertebrae, though some people range from four to six vertebrae in the lumbar region. The purpose of the lumbar spine is to stabilize and support the weight of the body, while still allowing the spine to move and bend freely. Because of the versatile nature of the lumbar spine, the vertebrae in this area are prone to injury and the development of spine conditions. The L3 vertebra is particularly susceptible to injury because it is the middle vertebra in the lumbar spine, which means it handles the most stress when the lumbar twists and bends.  The L3 vertebra holds most of the weight and stress of the body compared to the other vertebrae in the lumbar spine. Because of this, there are several spine conditions that can develop at the L3 vertebra and impact the surrounding nerve root, disc and/or joint. The most common spine conditions at the L3 vertebra include: Herniated disc Bulging disc Bone spurs Spondylosis Spondylolisthesis Arthritis of the spine There are several other conditions that may develop as a result of vertebral compression in the lumbar spine, which often impacts the disc and joints in the spine. If a spine condition occurs in the L3 vertebra, the symptoms will likely include chronic lower back pain and radiating pain in the buttock and leg of the impacted side. Additionally, the leg and foot might feel weak and numb due to the impacted nerve root being unable to send strong signals to the extremities. \\
\#\#\# Webpage text for URL ends

\#\#\# Output \\
\#\#\#\# Query Generation from Webtext Task \fourtabs\tab \texttt{\textbf{/* Useful Queries extracted from this web-page */}} \\ 
location of L3 \\ 
how many vertebrae in lumbar spine \\
causes for spondylosis \\
role of lumbar spine \\
chronic back pain \\ 
weak and numb feet \\

Now perform the Query Generation from Webtext task. Ensure that the generated query can be answered or contains information relevant to the webtext of the given URL. Try to generate new kinds of queries that do not overlap with the given relevant queries. Generate at most 10 search queries for the Query Generation from Webtext Task. Do not generate any query that has a text overlap with either the URL or the given relevant queries

\#\#\# URL
\{url\}

\#\#\# Relevant queries \\
\{relevant queries\} \\
\#\#\# Webpage text for URL begins \\
\{doc title\} \\
\{doc body\} \\
\#\#\# Webpage text for URL ends \\
\#\#\# Output 
\end{tcolorbox}



\begin{table*}[h!]
\caption{Results on biased and unbiased test sets for both XC base models: Renée and DEXML, comparing all baselines and \alg on \orcas}
\centering
\begin{tabular}{lGGGGcccc}
\toprule
& \multicolumn{8}{c}{\orcas} \\
\cmidrule(r){2-9}
&  \multicolumn{4}{G}{Biased} & \multicolumn{4}{c}{Unbiased}  \\
& P@5 & R@10 & R@25 & R@100 & P@5 & R@10 & R@25 & R@100  \\
\midrule
\dexml & 28.52 & 91.70 & 95.03 & 97.19 & 42.05 & 21.77 & 28.32 & 39.08\\
+ Gandalf & 28.38 & 91.60 & 95.09 & 97.29 & 44.09 & 22.39 & 29.87 & 40.52\\
+ LEVER & 28.15 & 91.19 & 94.81 & 97.12 & 43.64 & 21.70 & 29.26 & 40.45 \\
+ IPS & 26.60 & 88.72 & 93.50 & 96.59 & 45.45 & 22.31 & 29.14 & 39.42\\
+ LEVER + IPS & 27.30 & 89.93 & 94.21 & 96.92 & 46.14 & 23.47 & 30.92 & 41.04 \\
+ \slmaug & 27.28 & 90.18 & 94.56 & 97.24 & 46.36 & 26.42 & 33.71 & 47.70 \\
+ \alg (Ours) & 26.35 & 88.47 & 93.59 & 96.71 & 50.68 & 26.99 & 36.90 & 49.60\\
\midrule
\midrule
\renee & 27.85 & 90.44 & 94.19 & 96.64 & 45.68 & 21.75 & 28.22 & 38.42 \\
+ Gandalf & 28.78 & 91.98 & 94.97 & 96.94 & 44.09 & 20.89 & 27.58 & 34.66\\
+ LEVER & 29.13 & 92.95 & 95.96 & 97.76 & 45.45 & 22.58 & 28.40 & 39.52\\
+ IPS & 28.24 & 90.40 & 93.47 & 95.61 & 40.91 & 19.67 & 24.38 & 31.08 \\
+ LEVER + IPS & 28.61 & 91.74 & 94.85 & 96.88 & 43.64 & 21.52 & 29.24 & 37.10 \\
+ \alg (Ours) & 25.62 & 86.29 & 92.19 & 96.05 & 50.68 & 27.20 & 38.04 & 52.39 \\
\bottomrule
\end{tabular}
\label{tab:orcas-suppl-results}
\end{table*}

\begin{table*}[h!]
\caption{Results on biased and unbiased test sets for both XC base models: Renée and DEXML, comparing all baselines and \alg on \wiki}
\centering
\begin{tabular}{lGGGGcccc}
\toprule
& \multicolumn{8}{c}{\wiki} \\
\cmidrule(r){2-9}
&  \multicolumn{4}{G}{Biased} & \multicolumn{4}{c}{Unbiased}  \\
& P@5 & R@10 & R@25 & R@100 & P@5 & R@10 & R@25 & R@100  \\
\midrule
\dexml & 22.40 & 36.82 & 40.91 & 44.22 & 26.55 & 8.27 & 10.87 & 15.25 \\
+ Gandalf & 21.82 & 35.74 & 39.84 & 43.32 & 26.29 & 8.23 & 10.73 & 15.01 \\
+ LEVER  & 21.07 & 34.94 & 39.31 & 42.99 & 25.47 & 7.57 & 9.89 & 13.78 \\
+ IPS & 21.19 & 35.50 & 40.19 & 43.98 & 25.22 & 8.10 & 10.92 & 15.92\\
+ LEVER + IPS  & 19.01 & 32.24 & 37.50 & 42.35 & 22.23 & 6.82 & 9.03 & 12.92\\
+ UDAPDR & 15.70 & 26.54 & 31.47 & 37.16 & 22.05 & 6.83 & 10.27 & 17.15 \\
+ \alg (Ours) & 15.38 & 26.44 & 32.21 & 37.64 & 32.65 & 11.13 & 16.59 & 25.42 \\
\midrule
\midrule
\renee & 21.73 & 34.93 & 38.56 & 41.62 & 26.99 & 7.89 & 9.95 & 13.07 \\
+ Gandalf & 21.35 & 34.84 & 38.91 & 42.20 & 27.87 & 8.46 & 11.08 & 15.30\\
+ LEVER  & 21.76 & 35.78 & 40.05 & 43.86 & 27.83 & 8.59 & 11.15 & 15.98\\
+ IPS & 21.45 & 34.90 & 38.59 & 41.19 & 28.18 & 9.12 & 12.01 & 16.75\\
+ LEVER + IPS  & 21.00 & 35.01 & 39.48 & 43.54 & 27.46 & 9.23 & 12.63 & 19.28 \\
+ \alg (Ours) & 12.88 & 22.19 & 28.70 & 35.05 & 32.42 & 11.18 & 17.55 & 28.32\\
\bottomrule
\end{tabular}
\label{tab:wiki-suppl-results}
\end{table*}

\section{Ablation}
\label{sec:appendix-ablations}

\paragraph{Meta-data Augmented XC  ablation}: 
We provide more details for this ablation that was discussed in the main paper (See \ref{sec:main-ablations}). Just to recap, for this ablation, we wanted to test if directly using retrieval augmentation (RA) by providing available meta-data as part of the query would help in our scenario. To test this, we train a \renee model on the simulated biased training dataset of \wiki as used in our main experiments, however during training, we provide the metadata alongside the query. During test time too, we provide the metadata alongside the test query. We observe that this setting underperforms even when provided with the relevant knowledge for prediction. This goes to show that the \renee model memorizes the bias in the training dataset, and predicts the similar biased documents. This hints that RA like works might fail if used with biased training datasets. However, if we train the \renee model, again with metadata alongside the query, but on training dataset generated by \alg, we observe a sharp increase in performance. In this setting, the \renee + \alg model actually starts generalizing with the metadata present alongside the test query, and performs better as compared to the \renee + \alg model that uses only the test query during test time (see table \ref{tab:wiki_ra_doesnt_work}, R@100, in case of RA used with \alg, increases from 28.3 to 45.98 i.e. by more than 15 absolute points). This shows that RA works might be complementary to our proposed approach.

\paragraph{Effect of number of synthetic pairs (new training pairs) on unbiased performance}: In figure \ref{fig:orcas_split_vatiation}, we observe that as we increase the number of new training pairs or the synthetic pairs obtained using \alg, we see an increase in downstream unbiased performance of \renee model, as expected. We control the number of added pairs using the similarity threshold $\tau$. However, when we go pass a certain limit of number of added pairs, we see that the performance drops. This might be due to the fact that noisy pairs are being added in the dataset due to the low value of $\tau$. Thus, there is a sweet-spot when we obtain the maximum unbiased performance using \alg. This drop in performance could be mitigated by using a better retriever for mapping, or by increasing the number of synthetic queries that is being generated by the SLM during Step 1 of \alg.

\paragraph{Justification of design choices in SLM}: We perform an ablation to justify our design choices regarding (i) finetuning the SLM, and (ii) providing metadata to the SLM. In Figure \ref{fig:gpt4_pt_ft_llama}, we compare a pretrained SLM, finetuned SLM and the LLM performance on the task of good \textit{quality} synthetic document generation (measured using GT coverage @ 100, see figure \ref{fig:gpt4_pt_ft_llama}). SLM used is Llama2-7B and LLM is GPT4. We see a clear increase in performance when we move from pretrained SLM, to finetuned SLM and then finally to LLM. Additionally, we observe that metadata consistently improves generation quality when used with any model. Interestingly, finetuned SLM reaches close to LLM performance when provided with metadata. Thus, \alg is able to generate LLM-like quality of synthetic documents, with a much higher throughput, when finetuned and provided with relevant metadata, even though it is unstructured.

\paragraph{Size of the SLM used in \alg}: In table \ref{tab:slm_performance}, we see the effect of the size of the SLM used in Step 1 of \alg algorithm. We observe that even if we decrease the size of the SLM almost by 7x, we see marginal drop ($<1\%$) in downstream performance. This highlights extreme scalability of \alg even when used in limited compute regimes. How far can this be stretched is an interesting future work.

\begin{figure}[htbp]  
    \centering  
    \includegraphics[width=0.5\textwidth]{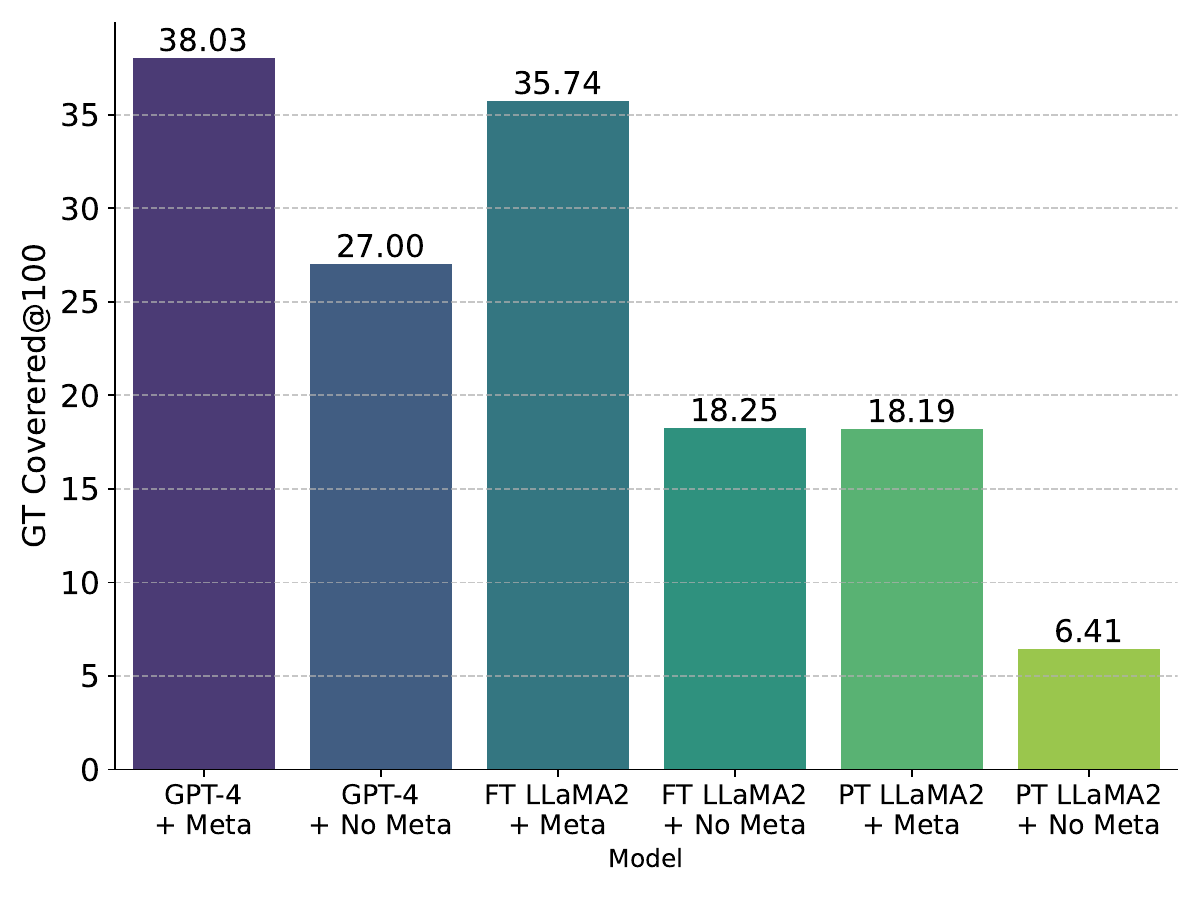}  
    \caption{
    Comparison of the effect of different language models and metadata on the quality of generated synthetic documents. GPT-4, a pretrained \llama-7B, and a finetuned \llama-7B are evaluated with and without metadata on the \wiki dataset where synthetic document generation is performed. The quality of synthetic documents is measured by GT covered @ 100, which is the average number of actual ground-truth (GT) documents retrieved using synthetic documents. For each synthetic document, the top 100 nearest documents are retrieved using a finetuned NGAME encoder and checked against the GT. The pretrained \llama~ has the lowest quality, while the finetuned \llama~ with metadata almost matches GPT-4 with metadata (\llama: 35.7 vs. GPT-4: 38.0).
    }  
    \label{fig:gpt4_pt_ft_llama}  
\end{figure}

\begin{figure}[htbp]  
    \centering  
    \includegraphics[width=0.5\textwidth]{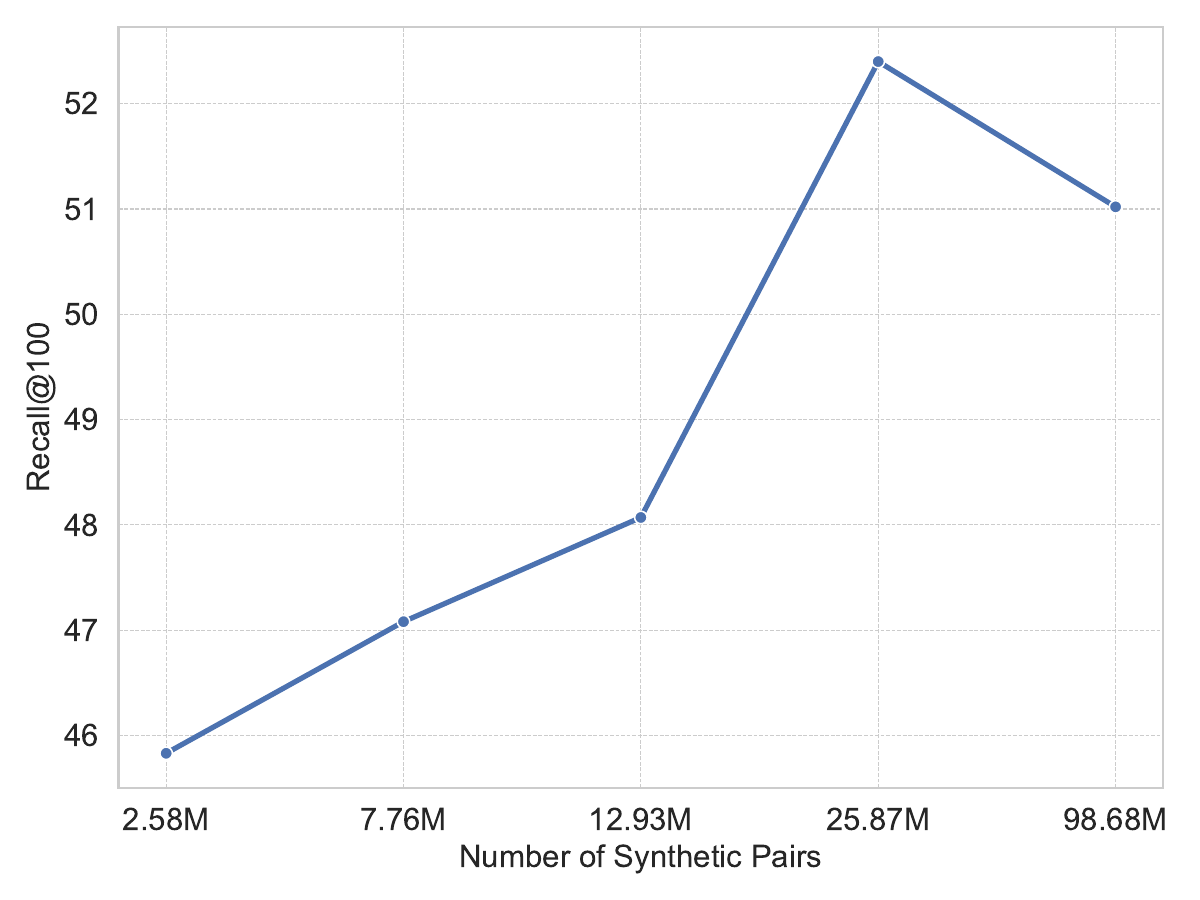}  
    \caption{Effect of the number of synthetic pairs added in Step 2 of \alg on the downstream retrieval performance of \renee on the \orcas dataset. We control the number of synthetic pairs added by tuning the similarity threshold $\tau$.}  
    \label{fig:orcas_split_vatiation}  
\end{figure}



\begin{table}[h!]
\centering
\small
\caption{Comparative analysis of the click-based ground truth (row 1), a state-of-the-art XC algorithm, \renee + LEVER (row 2), and our method, \renee + SKIM (row 3). Entries in black represent documents that have a click in the biased ground truth (row 1) or documents that are predicted by the models in top-100 predictions (rows 2 and 3). Grey cells indicate documents that are relevant to the query "Exon Definition" but are absent from both the ground truth (in row 1) or absent the top-100 predictions of the models (in row 2 and 3). All documents are relevant to the query "Exon Definition," according to the unbiased test set. Documents are grouped into four clusters based on their related concepts. The biased clicks in row 1  connect the query to document concepts such as "Exon" and "Exome Sequencing." However, there are no clicks linking "Exon Definition" to concepts like "Genes" or "RNA." The LEVER algorithm (row 2) retrieves documents from clusters similar to those seen during training (e.g., "Exon" and "Exome Sequencing") but struggles with other concepts. In contrast, \alg (row 3) successfully retrieves documents across all four clusters, emphasizing the value of integrating world knowledge. Document ranks are shown next to the URL in bold, and URLs have been shortened for brevity.} 
\begin{tabular}{|p{1.2cm}|p{1.2cm}|p{3cm}|p{3.25cm}|p{3.25cm}|p{3.25cm}|}
\hline
\multirow{2}{*}{} & \multirow{2}{*}{} & \multicolumn{4}{c|}{Document Concepts} \\ \cline{3-6} 
 &  Query & Exon & Exome  Sequencing & Genes & RNA \\ \hline
Biased Ground Truth & exon \newline definition & 
\textcircled{1}\texttt{merriam-webster.com dictionary/exon}, \newline \textcircled{2}\texttt{en.wikipedia.org wiki/Exon} \newline \textcolor{gray}{\textcircled{3}\texttt{net.science/diff-bw -exons-and-introns}}  & 

\textcircled{1}\texttt{en.wikipedia.org/ Exome-sequencing}, \newline 
\textcolor{gray}{\textcircled{2}\texttt{broadinstitute.org /what-exome-seq}}, \newline
\textcolor{gray}{\textcircled{3}\texttt{blogs.scientific american.com/10-things- exome-seq..}}
& 
\textcolor{gray}{\textcircled{1}\texttt{news-medical.net /What-are-Genes}} \newline
\textcolor{gray}{\textcircled{2}\texttt{www.genomenewsnetwork /whats a-genome/}} \newline
\textcolor{gray}{\textcircled{3}\texttt{en.wikipedia.org/wiki /Human-genome}} \newline
\textcolor{gray}{\textcircled{4}\texttt{en.wikipedia.org/wiki /Gene-expression}}
&  
\textcolor{gray}{\textcircled{1}\texttt{en.wikipedia.org/wiki /Messenger-RNA}} \newline
\textcolor{gray}{\textcircled{2}\texttt{en.wikipedia.org/wiki /Precursor-mRNA}} \newline
\textcolor{gray}{\textcircled{3}\texttt{en.wikipedia.org/wiki /RNA-splic..}} \newline
\textcolor{gray}{\textcircled{4}\texttt{en.wikipedia.org/wiki/ Alternative-splice}}

\\ \hline
\renee + LEVER Predictions (top-100) & exon \newline definition & 
\textcircled{1}\texttt{merriam-webster.com dictionary/exon} \textbf{(2)}, \newline \textcircled{2}\texttt{en.wikipedia.org wiki/Exon} \textbf{(1)} \newline \textcircled{3}\texttt{net.science/diff-bw -exons-and-introns} \textbf{(15)}
& 

\textcircled{1}\texttt{en.wikipedia.org/ Exome-sequencing} \textbf{(12)}, \newline 
\textcircled{2}\texttt{broadinstitute.org /what-exome-seq} \textbf{(51)}, \newline
\textcircled{3}\texttt{blogs.scientific american.com/10-things- exome-seq..}\textbf{(91)}
&
\textcolor{gray}{\textcircled{1}\texttt{news-medical.net /What-are-Genes}} \newline
\textcolor{gray}{\textcircled{2}\texttt{www.genomenewsnetwork /whats a-genome/}} \newline
\textcolor{gray}{\textcircled{3}\texttt{en.wikipedia.org/wiki /Human-genome}} \newline
\textcolor{gray}{\textcircled{4}\texttt{en.wikipedia.org/wiki /Gene-expression}}

& 

\textcolor{gray}{\textcircled{1}\texttt{en.wikipedia.org/wiki /Messenger-RNA}} \newline
\textcolor{gray}{\textcircled{2}\texttt{en.wikipedia.org/wiki /Precursor-mRNA}} \newline
\textcolor{gray}{\textcircled{3}\texttt{en.wikipedia.org/wiki /RNA-splic..}} \newline
\textcolor{gray}{\textcircled{4}\texttt{en.wikipedia.org/wiki/ Alternative-splic..}}
\\ \hline
 \renee + \alg predictions (top-100)&  exon \newline definition &
\textcircled{1}\texttt{merriam-webster.com dictionary/exon} \textbf{(4)}, \newline \textcircled{2}\texttt{en.wikipedia.org wiki/Exon} \textbf{(1)} \newline \textcircled{3}\texttt{net.science/diff-bw -exons-and-introns} \textbf{(10)}
&  
\textcircled{1}\texttt{en.wikipedia.org/ Exome-sequencing} \textbf{(18)}, \newline 
\textcircled{2}\texttt{broadinstitute.org /what-exome-seq} \textbf{(11)}, \newline
\textcircled{3}\texttt{blogs.scientific american.com/10-things- exome-seq..}\textbf{(35)}
&  
\textcircled{1}\texttt{news-medical.net /What-are-Genes} \textbf{(82)} \newline
\textcircled{2}\texttt{www.genomenewsnetwork /whats a-genome/} \textbf{(20)} \newline
\textcircled{3}\texttt{en.wikipedia.org/wiki /Human-genome} \textbf{(80)} \newline
\textcircled{4}\texttt{en.wikipedia.org/wiki /Gene-expression} \textbf{(52)}
& 
\textcircled{1}\texttt{en.wikipedia.org/wiki /Messenger-RNA} \textbf{61} \newline
\textcircled{2}\texttt{en.wikipedia.org/wiki /Precursor-mRNA} \textbf{(73)} \newline
\textcircled{3}\texttt{en.wikipedia.org/wiki /RNA-splic..} \textbf{(12)} \newline
\textcircled{4}\texttt{en.wikipedia.org/wiki/ Alternative-splic..} \textbf{(16)}

  \\ \hline
\end{tabular}
\label{tab:label_concepts}
\end{table}


\section{Related Works (Extended Version)}
\label{sec:appendix-rworks}

\textbf{Extreme classification}: Extreme Classification or XC is a prominent supervised learning formulation for large-scale (in orders of millions of documents) retrieval problems and has been very influential due to its success in many practical scenarios.
The methods proposed for extreme classification can be broadly categorized into two families: one-versus-all classifier-based methods \cite{jain2016extreme, prabhu2014fastxml} and dual encoders \cite{kharbanda2022cascadexml, jain2023renee, gupta2022elias, dahiya2023ngame, gupta2024dualencoders}.
Key innovations in this domain include transition from sparse feature based classifier learning \cite{agrawal2013multi,babbar2017dismec,jain2016extreme,babbar2019data,bhatia2015sparse,barezi2019submodular,jain2019slice,prabhu2014fastxml,prabhu2018extreme,prabhu2018parabel,khandagale2020bonsai} to deep-encoder based representations \cite{dahiya2021deepxml, dahiya2023deep, dahiya2023ngame}, efficient end-to-end training frameworks \cite{jain2023renee}, Siamese networks for query/document representations \cite{lu2020twinbert,qu2020rocketqa,lu2021less}, exploration of more advanced deep learning architectures for the encoder and the classifier components \cite{zhang2021fast,dahiya2021siamesexml,dahiya2021deepxml,gupta2022elias,kharbanda2022cascadexml,ye2020pretrained,you2019attentionxml}, and effective negative sampling strategies for training \cite{dahiya2023ngame,jiang2021lightxml,rawat2021disentangling,guo2019breaking,xiong2020approximate,reddi2019stochastic}. 

\textbf{Missing label and long-tail biases:} 
XC models, as other retrieval models, are susceptible to biases in their training set collected from system log data. Key biases that significantly affect the training and evaluation of XC models include document distribution bias \cite{buvanesh2024enhancing, schultheis2022missing}, selection bias\cite{marlin2012collaborative}, position bias\cite{collins2018study}, exposure bias \cite{lee2023uctrl, zhang2020causaldebiasing} and inductive bias (refer to \cite{chen2023bias} for a recent survey). Collectively, these biases lead to the \textit{missing label} problem wherein some relevant documents for a query are missing not at random. Additionally, certain documents are less represented than others in the training data, leading to a long-tail distribution over documents.
Consequently, missing labels and long-tail biases presents two critical challenges in XC \cite{schultheis2022missing}: 
1) relevant documents goes missing in the
observed training data; 2)  infrequent (tail) documents are much harder
to predict than frequent (head) documents due to data imbalance.
In Section~\ref{sec:theory}, we formalize how the systematic nature of missingness due to production systems leads not only to missing label bias but also to missing knowledge in training data. 

\textbf{Addressing missing label and long-tail biases:}
Within the XC literature, the most commonly adopted solution to the missing labels problem is propensity-based learning \cite{jain2016extreme,qaraei2021convex,wydmuch2021propensity,wei2021towards}. The propensity score estimates the likelihood for a query-document pair to go missing, given that the document is relevant. The standard loss functions and metrics can be corrected for the missing label bias by reweighting the individual terms by the propensity values.
Assuming that the propensities only depend on the document and that these values are known or can be estimated from some external meta-data \cite{mcauley2015inferring}, \citet{jain2016extreme} introduced unbiased loss functions and various evaluation metrics like nDCG@K, Recall@K and Precision@K. But, as we show in this paper, both theoretically and empirically, the missing label bias in XC can cause \textit{missing knowledge }problem which cannot be recovered by propensity based learning. Evaluation of XC models on unbiased datasets using propensity scored metrics, such as PSP@k, PSN@k \cite{bhatia2016extreme,jain2016extreme}, will also suffer from the same limitations. 


In similar settings such as recommendation systems \cite{saito2020unbiased,wang2019doubly,schnabel2016recommendations}, and positive unlabeled learning (PUL) \cite{bekker2018learning,bekker2020learning,jaskie2019positive,kato2019learning}, the missing labels problem has been addressed by propensity based methods \cite{jain2016extreme, qaraei2021convex}, imputation error based methods \cite{buvanesh2024enhancing, kharbanda2023gandalf}, and doubly robust methods \cite{wang2019doubly,li2023stabledr, Kweon2024doublycaliberated} that try to combine both these paradigms. Studies addressing this problem for recommendation systems using propensity-based corrections mechanisms \cite{tobias2016treatments, joachims2017unbiased, saito2020unbiased} often require an unbiased subset of training data over the user-item space to train the propensity models \cite{wang2021combatingselection, tobias2016treatments}. However, obtaining such unbiased subsets for large-scale retrieval applications is difficult and their absence may lead to high variance in estimated propensities. 

\textbf{Teacher Models and Data Augmentation:}
There is now a significant body of work that tries to train performant XC or retrieval models by augmenting \cite{dai2022promptagatorfewshotdenseretrieval, saadfalcon2023udapdr, jeronymo2023inparsv2largelanguagemodels, bonifacio2022inparsdataaugmentationinformation} the training dataset with external resources like teacher models, and query/document meta-data. Although these approaches don't make explicit connections to the problem of missing label bias, as show in Section 3, they are relevant to solving the missing labels problem. 
Recent efforts \cite{mittal2024graphregularizedencodertraining, mohan2024oak} have shown that auxiliary information or structured metadata linked with the queries or the documents, though rarely available with XC or short-text retrieval datasets, can be utilized to boost performance of XC models. 
LEVER \cite{buvanesh2024enhancing} and Gandalf \cite{kharbanda2023gandalf} uses a teacher-model trained on the existing dataset to help XC models enhance their performance on tail documents. 
RocketQA \cite{qu2021rocketqa} uses a more expressive cross-encoder model to impute the training data for a student dual-encoder retrieval model.
SemsupXC \cite{aggarwal2023semsupxc} is another closely related work that attempts to incorporate external information by scraping information from the web in order to improve performance in zero-shot or few-shot settings, involving previously unseen documents. 
But, as their model is trained on the biased training data, the trained model will continue to suffer from missing knowledge issue even with the help of additional meta-data.

Recently, LLMs have been used for generating data for training task-specific models \cite{lee2023making}, SLMs/LLMs \cite{gunasekar2023textbooksneedphi,maini2024rephrasing,hartvigsen2022toxigen,rosenbaum2022clasp,rosenbaum2022linguist}, and multimodal models \cite{li2024datacomp}. This approach to training models using LLM-generated datasets may help in combating various problems like data scarcity in low-resource settings, lack of unbiased datasets, noisy and fake information in web-based training data, or where annotating sufficient data is a costly process \cite{li2023synthetic,adler2024nemotron}. The strategies used for this kind of dataset creation approaches include heuristic filtering \cite{rae2021scaling}, quality filtering \cite{muennighoff2024scaling, fang2024data, abdin2024phi}, deduplication \cite{abbas2023semdedup}, data mixing \cite{xie2024doremi, albalak2023efficient, du2022glam}, synthetic data generation \cite{gunasekar2023textbooksneedphi, maini2024rephrasing}, data augmentation \cite{whitehouse2023llm}, or generating task-specific synthetic data by transforming existing datasets for a similar task \cite{gandhi2024better}. There have also been some efforts to directly finetune SLMs as encoders for query / documents in a retrieval task \cite{mohankumar2023unified,muennighoff2024generative,ma2024finetuningllama}.

In contrast, we propose a way for \textit{completing} existing task specific dataset by addressing \textit{knowledge gaps} present in the dataset. 
While existing work mainly focus on creating task-specific datasets from scratch or augmenting to extend the scale of these datasets, it is possible that the biases in the source dataset / LLM may just get scaled up in this synthetic data creation process. 
In addition,  using oracles LLMs like GPT-4 or Claude is not scalable, so there is a need for more efficient ways to augment the training data. 


\end{appendix}







\end{document}